\definecolor{mygreen}{rgb}{0,0.6,0}
\definecolor{mygray}{rgb}{0.5,0.5,0.5}
\definecolor{mymauve}{rgb}{0.58,0,0.82}
\newcommand{\ignore}[1]{}
\titlespacing\section{0pt}{0pt plus 0pt minus 1pt}{0pt plus 0pt minus 1pt}
\titlespacing\subsection{0pt}{0pt plus 0pt minus 1pt}{0pt plus 0pt minus 1pt}
\titlespacing\subsubsection{0pt}{0pt plus 0pt minus 1pt}{0pt plus 0pt minus 1pt}
\newtheorem{definition}{Definition}
\newtheorem{lemma}{Lemma}
\newtheorem{proposition}{Proposition}
\def\va{{\bm{a}}}
\def\vb{{\bm{b}}}
\def\vk{{\bm{k}}}
\def\vq{{\bm{q}}}
\def\vv{{\bm{v}}}
\def\vx{{\bm{x}}}
\def\mA{{\bm{A}}}
\def\mD{{\bm{D}}}
\def\mK{{\bm{K}}}
\def\mL{{\bm{L}}}
\def\mQ{{\bm{Q}}}
\def\mU{{\bm{U}}}
\def\mV{{\bm{V}}}
\def\mW{{\bm{W}}}
\def\mX{{\bm{X}}}
\def\mZ{{\bm{Z}}}
\def \RR {{\mathbb{R}}}
\title{
FMMformer: Efficient and Flexible Transformer \\ via Decomposed Near-field and Far-field Attention
}
\author{%
   Tan M. Nguyen\\
  Department of Mathematics\\ 
  University of California, Los Angeles\\
  Los Angeles, CA, USA\\
  \And
   Vai Suliafu \thanks{Co-first author} \\
   School of Computing\\
   Scientific Computing and Imaging (SCI) Institute\\
   University of Utah, Salt Lake City, UT, USA\\
   \And
   Stanley J. Osher \\
   Department of Mathematics\\
   University of California, Los Angeles\\
   Los Angeles, CA, USA\\
    \AND
   Long Chen \\
  Department of Mathematics\\
  University of California, Irvine\\
  Irvine, CA, USA\\
   \And
   Bao Wang \thanks{Please correspond to: wangbaonj@gmail.com  or chenlong@math.uci.edu} \\
   Department of Mathematics\\
   Scientific Computing and Imaging (SCI) Institute\\
   University of Utah, Salt Lake City, UT, USA\\
}
\begin{document}

\setlength{\abovedisplayskip}{2.5pt}
\setlength{\belowdisplayskip}{2.5pt}

\maketitle

\begin{abstract}
We propose FMMformers, a class of efficient and flexible transformers inspired by the celebrated fast multipole method (FMM) for accelerating interacting particle simulation. FMM decomposes particle-particle interaction into near-field and far-field components and then performs direct and coarse-grained computation, respectively. Similarly, FMMformers decompose the attention into near-field and far-field attention, modeling the near-field attention by a banded matrix and the far-field attention by a low-rank matrix. Computing the attention matrix for FMMformers requires linear complexity in computational time and memory footprint with respect to the sequence length. In contrast, standard transformers suffer from quadratic complexity. We analyze and validate the advantage of FMMformers over the standard transformer on the Long Range Arena and language modeling benchmarks. FMMformers can even outperform the standard transformer in terms of accuracy by a significant margin. For instance, FMMformers achieve an average classification accuracy of $60.74\%$ over the five Long Range Arena tasks, which is significantly better than the standard transformer's average accuracy of $58.70\%$.
\end{abstract}

\section{Introduction}

Transformers \cite{vaswani2017attention} have achieved state-of-the-art performance in sequence processing tasks, including machine translation and language modeling \cite{vaswani2017attention,al2019character,dai2019transformer,baevski2018adaptive,williams-etal-2018-broad,devlin2018bert,NEURIPS2020_1457c0d6}. Also, transformers can effectively transfer knowledge from a pre-trained model to tasks with limited supervision \cite{radford2018improving,radford2019language,devlin2018bert,yang2019xlnet,liu2019roberta}. Transformers rely on the attention mechanism and particularly self-attention \cite{cho-etal-2014-learning,parikh-etal-2016-decomposable,DBLP:journals/corr/LinFSYXZB17}, an inductive bias that connects each token in the input through a relevance weighted basis of every other token, as a fundamental building block for their modeling \cite{bahdanau2014neural,vaswani2017attention,kim2017structured}. Moreover, it has been argued that the flexibility in capturing diverse syntactic and semantic relationships \cite{tenney-etal-2019-bert,vig-belinkov-2019-analyzing,clark-etal-2019-bert,voita-etal-2019-analyzing,hewitt-liang-2019-designing} and the capacity of the attention mechanism \cite{tenney-etal-2019-bert} are critical components for the success of transformers.

\subsection{Self-attention}
The self-attention mechanism is used to learn long-range dependencies while enabling parallel processing of the input sequence. For a given input sequence $\mX:=[\vx_1,\vx_2,\cdots,\vx_N]^\top\in \RR^{N\times D_x}$ of $N$ feature vectors that have been encoded in a $D_x$-dimensional vector space, self-attention transforms $\mX$ into an output sequence $\Hat{\mV}$ in the following two steps:

\begin{enumerate}
    \item [Step 1.] Project the input sequence $\mX$ into three matrices via the following linear transformations
    $$
\mQ=\mX\mW_Q^\top; \mK=\mX\mW_K^\top; \mV=\mX\mW_V^\top,
$$
where $\mW_Q,\mW_K\in \RR^{D\times D_x}$, and $\mW_V\in \RR^{D_v\times D_x}$ are the weight matrices. We denote $\mQ:=[\vq_1,\cdots,\vq_N]^\top, \mK:=[\vk_1,\cdots,\vk_N]^\top$, and $\mV:=[\vv_1,\cdots,\vv_N]^\top$, where the vectors $\vq_i,\vk_i,\vv_i$ for $i=1,\cdots,N$ are the query, key, and value vectors, respectively.
    
    \item [Step 2.] For each query vector $\vq_i$ for $i=1,\cdots,N$, we compute the output vector $\Hat{\vv}_i$ as follows
    \begin{equation}\label{eq:attention-vec}
\hat{\vv}_i=\sum_{j=1}^N{\rm softmax}\Big(\frac{{\vq}_i^\top{\vk}_j}{\sqrt{D}}\Big){\vv}_j,\ \Longleftrightarrow 
\hat{\mV}={\rm softmax}\Big(\frac{{\mQ}{\mK}^\top }{\sqrt{D}}\Big){\bf V} :={\mA}{\mV},
\end{equation}
where the softmax function is applied to each row of the matrix $(\mQ\mK^\top)/\sqrt{D}$.
\end{enumerate}

For long sequences, the computational time and memory footprint of transformers are dominated by \eqref{eq:attention-vec}. It is evident that the memory cost is $\mathcal{O}(N^2)$ to store the attention matrix $\mA$. Also, the computational complexities of computing the matrix-matrix products $\mQ\mK^\top$ and $\mA\mV$ are both $\mathcal{O}(N^2)$. These limitations impede the application of transformers to many important settings that involve very long sequences \cite{j.2018generating,huang2018music,pmlr-v80-parmar18a}. When applying self-attention for long sequence modeling, we have to limit the context window to a reasonable size to make it computationally feasible, limiting the effectiveness of learning long-term dependencies. Efficient transformer models have been proposed, including leveraging sparse and low-rank attention. Many of the existing efficient transformers gain computational and memory efficiency at the cost of significant accuracy degradation. 

\subsection{Contribution}
Leveraging the idea of the fast multipole method (FMM) \cite{greengard1987fast}, we propose a class of efficient and flexible transformer, namely \emph{FMMformers}, to boost the performance of efficient transformers. At the core of FMMformers is to replace the self-attention $\Hat{\mV}=\mA\mV$ in \eqref{eq:attention-vec} with the following matrix-matrix product
\begin{equation}\label{eq:FMMformer-attention}
\Hat{\mV} := (\mD+\mL)\mV, 
\end{equation}
where $\mD$ is a banded matrix with bandwidth $k\ll N$ and $\mL$ is a low-rank matrix of rank $r\ll N$. In practice, we normalize matrix $\mD+\mL$ such that the sum of each row is $1$; for the sake of presentation, we ignore this normalization step below. Both $\mD\mV$ and $\mL\mV$ can be computed with linear computational and memory complexity; they model the near-field and far-field attention, respectively. FMMformers are flexible in designing the sparse banded matrix and the low-rank matrix for modeling near-field and far-field attention. In particular, we can control the bandwidth of the banded matrix $\mD$ and the rank of the low-rank matrix $\mL$ for expressivity and efficiency tradeoff. In addition to the efficiency and flexibility, FMMformers gain significant accuracy improvement over linear transformers and can even outperform the standard transformer in terms of accuracy. We illustrate the idea of FMMformers in Figure~\ref{fig:near_far_field}: Instead of modeling the full attention by a dense unstructured matrix, we employ a sparse banded matrix to model the near-field attention and several rank one matrices to model the far-field attention.

\begin{figure*}[!ht]
\centering
\includegraphics[width=\linewidth]{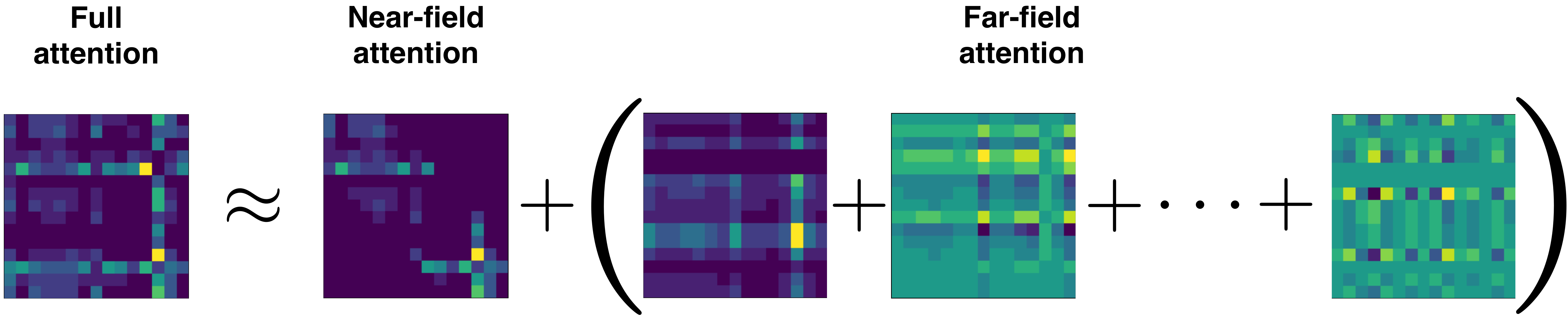}
\vskip -0.1cm
\caption{Left-hand side: we visualize a randomly selected full attention map (the matrix $\mA$ in \eqref{eq:attention-vec}) from the standard transformer trained for the CIFAR10 image classification task in the Long Range Arena (LRA) benchmark. Right-hand side: we illustrate how this attention map can be decomposed into near-field and far-field attention, which are modeled by a sparse banded matrix and the sum of several rank one matrices in our FMMformer, respectively.}
\label{fig:near_far_field}
\end{figure*}

\subsection{Organization}
We structure this paper as follows: In Section~\ref{sec:fmm-attention}, we briefly review the celebrated FMM algorithm and establish the connection between FMM and self-attention. In Section~\ref{sec:fmmformer-implementation}, we present a practical implementation of FMMformers that leverages existing techniques for low-rank matrix approximation. We validate and empirically analyze the efficiency and accuracy of FMMformers in Sections~\ref{sec:exp}. We discuss related works in Section~\ref{subsec:related-work}. The paper ends up with concluding remarks. Technical proofs and more experimental details are provided in the Appendix. 

\section{Fast Multipole Method and Self-attention Mechanism}\label{sec:fmm-attention}
\subsection{Fast multipole method vs. sparse and low-rank matrix approximation}\label{subsec:fmm-Hmatrix}
FMM is a numerical method that was originally developed to speed up the calculation of long-range forces in the $n$-body problem \cite{greengard1987fast} and has been regarded as one of the top $10$ algorithms in scientific computing in the $20$th century~\cite{cipra2000best}. The key idea is that {\em the far-field interaction can be well approximated by separable low-rank matrices} while the near-field interaction can be calculated directly. We use the following simple example to illustrate mathematical reasoning. Without ambiguity, we reuse notations in the previous section and assume:

\noindent (A1) $\mA(i, j) = g(| \vq_i - \vk_j |)$ depends on the distance of two vectors $\vq_i$ and $\vk_j$, where $\mA(i, j)$ is the $(i,j)$-th entry of the matrix $\mA \in \RR^{N\times N}$. 

\noindent (A2) The function $g(s)$ is smooth for $s\neq 0$.

\noindent (A3) The function $g$ satisfies $g(st) = g(s)g(t)$.

One noticeable example in the physical application is $g(| \vq_i - \vk_j |) = 1/| \vq_i - \vk_j |^2$, for which the key vectors $\{\vk_j\}$ are the location of source particles and the query vectors $\{\vq_i\}$ are the location of the target points. Assumption (A3) is not essential, which is presented here for the convenience of proof and can be replaced by other separable forms, e.g., $g(st) = g(s) + g(t)$. 
The near-field and far-field are defined through the distance $| \vq_i - \vk_j |$. 

We now explain the low-rank approximation based on the well-separated condition. For the illustration purpose, we assume the index set $\{1,2,\ldots, N\}$ is partitioned into two groups $\{T_1, T_2\}$. 

\begin{definition}
Group $T_1$ is called well-separated from $T_2$ if there exists a vector $\vk^*$ and a number $\delta \in (0,1)$ such that
$$
| \vk_j - \vk^*| \leq \delta | \vq_i - \vk^*| \quad \forall i\in T_1, j\in T_2.
$$
\end{definition}

The vector $\vk^*$ is a representative vector of $\{ \vk_j, j \in T_2\}$, e.g., the center of vectors in $T_2$. For any $\vq_i, i\in T_1$, it is far away from $\{\vk_j, j\in T_2\}$ and the far-field interaction can be approximated well by a function of $| \vq_i - \vk^*|$.  For example, when calculating the gravitation of a galaxy from the Earth, we can simply treat the galaxy as one single point, although the galaxy may contain hundreds of millions of stars.  

For a matrix $\mA$ and two index sets $I$ and $J$, we use $\mA(I,J)$ to denote the submatrix of $\mA$ with the row index set $I$ and the column index set $J$. 

\begin{lemma}\label{lemma-lowrank-approx}
Let  $\{T_1, T_2\}$ be two well-separated index sets. Assume (A1)-(A3) hold. For any $\varepsilon >0$, the sub-matrix $A(T_1, T_2)$ can be approximated by a rank $p$ matrix to a relative tolerance $\varepsilon>0$ in the sense that: there exists rank $p$ matrices $\mU\in \mathbb R^{|T_1|\times p},\mV\in \mathbb R^{|T_2|\times p}$, with $p\geq C|\log_{\delta} \epsilon|$, such that 
$$
|\mA(i, j) - (\mU\mV^{\top})(i,j) | \leq \epsilon, \quad \forall i\in T_1, j\in T_2.
$$
\end{lemma}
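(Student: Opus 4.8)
The plan is to construct the rank-$p$ factor as a separable (multipole-style) expansion of the kernel $g$ about the representative point $\vk^*$, and then to read off both the rank and the error bound from the tail of a convergent geometric series. First I would use (A2) and (A3) to pin down the analytic form of $g$: setting $s=t=1$ in $g(st)=g(s)g(t)$ gives $g(1)=1$ (for the nontrivial case), and differentiating the identity in $t$ and evaluating at $t=1$ yields $s\,g'(s)=g'(1)\,g(s)$, hence $g(s)=s^{\alpha}$ with $\alpha=g'(1)$. Thus $\mA(i,j)=|\vq_i-\vk_j|^{\alpha}$. The alternative separable form $g(st)=g(s)+g(t)$ flagged in the text gives $g(s)=\alpha\log s$ and is handled in exactly the same way, with sums in place of products.

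Next, treating $\vq_i,\vk_j,\vk^*$ as scalars (or complex numbers, as in the classical two-dimensional FMM) for the moment, I would introduce the scale-normalized quantity $z_{ij}=(\vk_j-\vk^*)/(\vq_i-\vk^*)$ and factor the kernel as $\mA(i,j)=(\vq_i-\vk^*)^{\alpha}\,(1-z_{ij})^{\alpha}$, which isolates all of the $i$-dependence in the prefactor $(\vq_i-\vk^*)^{\alpha}$. The well-separated hypothesis is precisely what controls $z_{ij}$: from $|\vk_j-\vk^*|\le\delta\,|\vq_i-\vk^*|$ we obtain $|z_{ij}|\le\delta<1$ for all $i\in T_1,\ j\in T_2$, so the binomial series $(1-z_{ij})^{\alpha}=\sum_{n\ge 0}\binom{\alpha}{n}(-z_{ij})^{n}$ converges.

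I would then truncate the series at order $p-1$ and recognize the truncation as a rank-$p$ product: taking $\mU(i,n)=\binom{\alpha}{n}(-1)^{n}(\vq_i-\vk^*)^{\alpha-n}$ and $\mV(j,n)=(\vk_j-\vk^*)^{n}$ for $n=0,\dots,p-1$ yields $(\mU\mV^{\top})(i,j)=\sum_{n=0}^{p-1}\binom{\alpha}{n}(-1)^{n}(\vq_i-\vk^*)^{\alpha-n}(\vk_j-\vk^*)^{n}$, which is exactly the partial sum. The approximation error is the geometric tail, $|\mA(i,j)-(\mU\mV^{\top})(i,j)|=|\vq_i-\vk^*|^{\alpha}\,\bigl|\sum_{n\ge p}\binom{\alpha}{n}(-z_{ij})^{n}\bigr|\le C_{\alpha}\,|\vq_i-\vk^*|^{\alpha}\,\delta^{p}/(1-\delta)$. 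Requiring this, relative to $g(|\vq_i-\vk^*|)=|\vq_i-\vk^*|^{\alpha}$, to be at most $\epsilon$ forces $\delta^{p}\lesssim\epsilon$; taking logarithms and absorbing constants gives $p\ge C|\log_{\delta}\epsilon|$, which is exactly the claimed rank.

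The hard part will be making the factorization genuinely separable beyond the scalar picture. The ratio $z_{ij}$ and the clean ``one separable term per order'' structure are transparent when the points are scalars, but for vectors in $\RR^{d}$ the quantity $|\vq_i-\vk_j|^{\alpha}$ depends on the relative angle, and a naive Taylor expansion in $\vk_j-\vk^*$ produces order-$n$ contributions of magnitude $\mathcal{O}(\delta^{n})$ but with a number of separable summands that grows with $d$. The fix I would pursue is to expand in the scalar well-separation parameter directly: group the multivariate expansion by total degree, bound each degree-$n$ block by $\delta^{n}$ using $|\vk_j-\vk^*|/|\vq_i-\vk^*|\le\delta$ together with boundedness of the derivatives of $g$ on the compact annulus where the ratio lives, and sum the geometric tail to recover $p\ge C|\log_{\delta}\epsilon|$. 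Verifying that this grouping collapses to a genuine rank $p$ rather than a $d$-dependent rank is the single step that truly requires the scalar/complex structure of the kernel, and is where I would concentrate the most care.
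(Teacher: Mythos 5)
Your proposal follows essentially the same route as the paper's proof: both use (A3) to factor $\mA(i,j) = g(|\vq_i - \vk^*|)\,g(1+r)$ with $|r|\leq \delta$, expand in that ratio (the paper Taylor-expands a general $g$ about $1$ with coefficients $g^{(m)}(1)/m!$, which are exactly your binomial coefficients $\binom{\alpha}{m}$ once $g(s)=s^{\alpha}$), truncate at order $p$ to read off the separable factors $\mU,\mV$, and bound the tail geometrically by $\mathcal{O}(\delta^{p+1})$ to get $p \geq C|\log_{\delta}\epsilon|$. The paper likewise works only in the scalar case and dismisses the vector case with a one-line remark about polar coordinates and spherical harmonics, so the ``hard part'' you flag is precisely the step the paper itself does not carry out.
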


The applicability of the analytic kernel function $g$ was limited to partial differential equations or integral equations where Green’s function satisfying (A1)-(A3). In the application of machine learning, it is hard to verify (A1)-(A3). Instead, we use the definition of diagonal-plus-semi-separable matrices from the book \cite[Definition 1.10]{bebendorf2008hierarchical}. We use MATLAB/Numpy notation ${\rm tril}(\mK,p)$ to denote the lower triangular matrix with zeros above the $p$th-subdiagonal of $\mK$ and similar notation ${\rm triu}(\mK,p)$ for the upper triangular part.

\begin{definition}\cite[Definition 1.10]{bebendorf2008hierarchical}\label{def-ss-matrix} A matrix $\mA\in \mathbb R^{N\times N}$ is called $(p,q)$-semi-separable if there exist matrices $\mU,\mV\in \mathbb R^{N\times p}$ and $\mW,\mZ\in \mathbb R^{N\times q}$ such that 
$$
\mA = {\rm triu}(\mU\mV^{\top},0) + {\rm tril}(\mW\mZ^{\top},1).
$$It is called diagonal-plus-semi-separable if  
$$
\mA = \mD + {\rm triu}(\mU\mV^{\top},1) + {\rm tril}(\mW\mZ^{\top},1).
$$
with some diagonal matrix $\mD$.
\end{definition}

Definition \ref{def-ss-matrix} can be naturally extended to include a banded matrix $\mD$ and sum of several low-rank matrices. 
\begin{wrapfigure}{r}{0.32\textwidth}
\vspace{-0.2cm}
\begin{center}
\begin{tabular}{c}
\includegraphics[width=\linewidth]{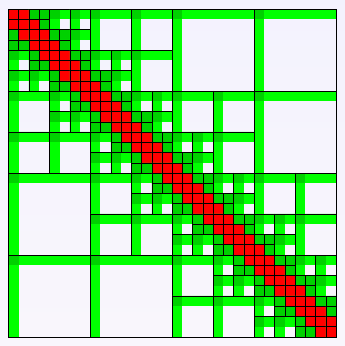}\\
  \end{tabular}
  \end{center}
  \vskip -0.15in
\caption{A $\mathcal H$-matrix based on a hierarchical decomposition of the index set. The red part is a tri-diagonal matrix and the green part can be written as sum of low rank matrices.}\label{fig:Hmatrix}\vspace{-0.15in}
\end{wrapfigure}
One can verify the semi-separable property of matrix $\mK$ by checking the decay of singular values of the matrix. As often used in low-rank approximation methods, the numerical rank or $\varepsilon$-rank of a matrix $K$, for a tolerance $\varepsilon$, is the number of singular values of $\mK$ that are greater than $\varepsilon\|\mK\|_2$. 

In the algebraic counterpart of FMM, the key observation is that off-diagonal matrices are semi-separable. Based on a hierarchical partition of the index set, a $\mathcal H$-matrix \cite{hackbusch1999sparse} can be constructed; see Figure~\ref{fig:Hmatrix} for an illustration. Further compression leads to $\mathcal H^2$-matrix \cite{H2matrix,hackbusch2002data} and the hierarchically semi-separable (HHS) matrix~\cite{chandrasekaran2006fast,xia2010fast}. Other variants include hierarchically block-separable (HBS) \cite{martinsson2005fast}, and hierarchically off-diagonal low-rank (HODLR) \cite{ambikasaran2013mathcal} matrices, etc.

In our application, we write the decomposition as
$$
\mA = \mD + \sum_{l=1}^r \phi_l(\mQ)\phi_l^{\top}(\mK).
$$
In the query and key spaces, the vectors $\vq_i$ and $\vk_j$ may not be well-separated. Then  nonlinear feature maps $\phi_l(\cdot), l=1,\cdots,r$ to higher dimensions can be used to make the mapped datasets more separable.

\begin{figure}[!ht]
\centering
\begin{tabular}{cccc}
\hskip -0.2cm\includegraphics[clip, trim=0.01cm 0.01cm 0.01cm 0.01cm, width=0.235\columnwidth]{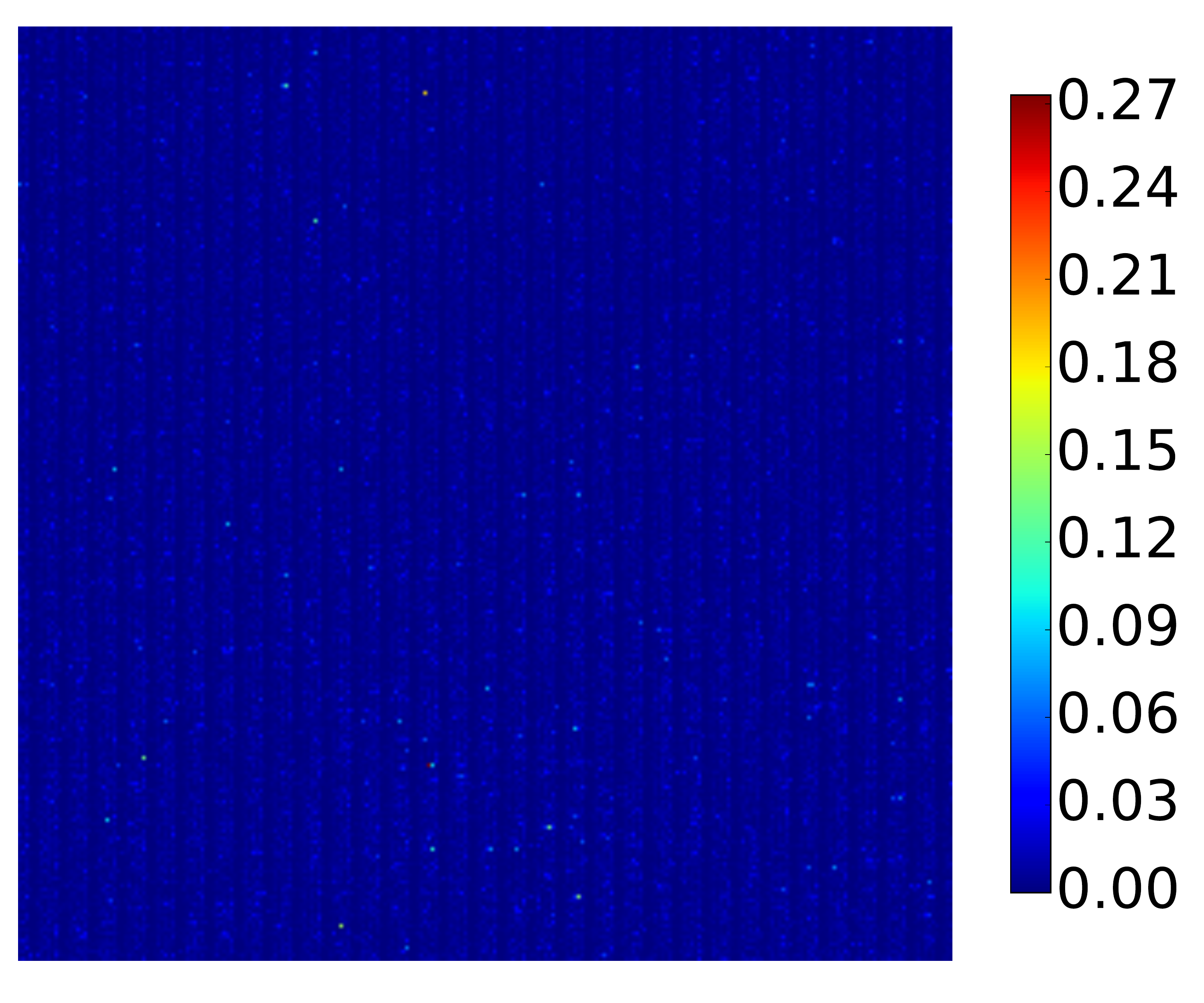}&
\hskip -0.2cm\includegraphics[clip, trim=0.01cm 0.01cm 0.01cm 0.01cm, width=0.235\columnwidth]{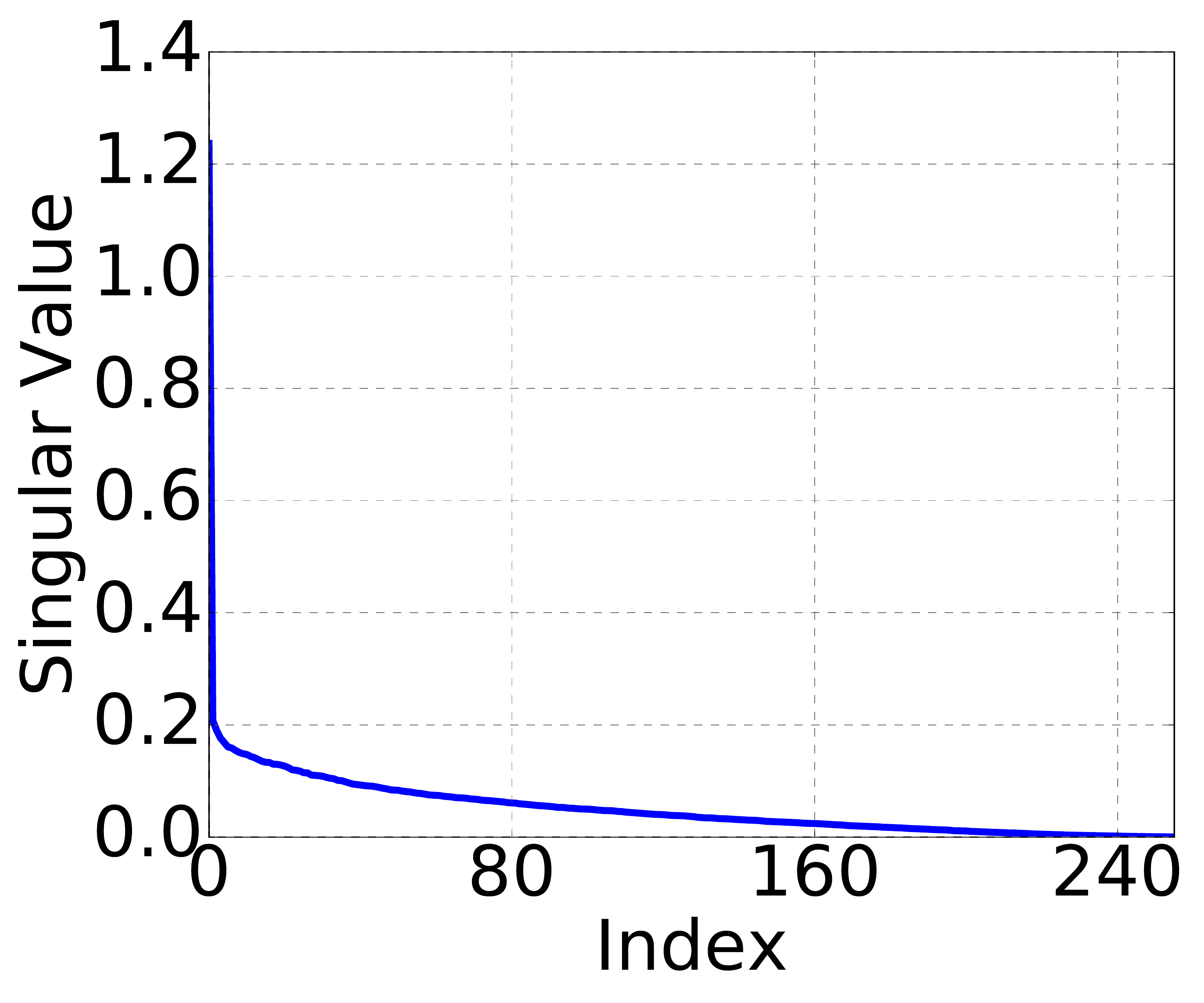}&
\hskip -0.2cm\includegraphics[clip, trim=0.01cm 0.01cm 0.01cm 0.01cm, width=0.235\columnwidth]{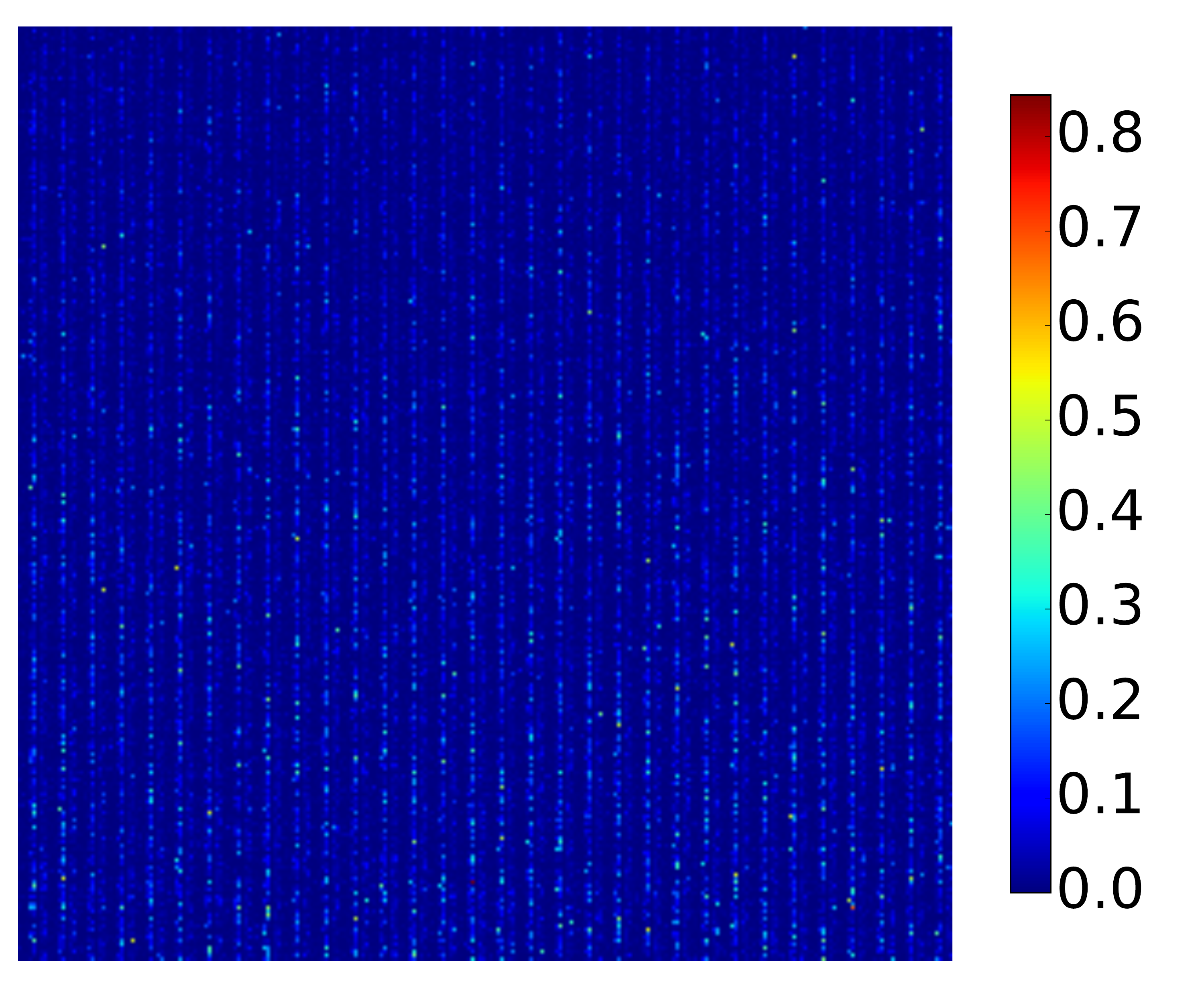}&
\hskip -0.2cm\includegraphics[clip, trim=0.01cm 0.01cm 0.01cm 0.01cm, width=0.235\columnwidth]{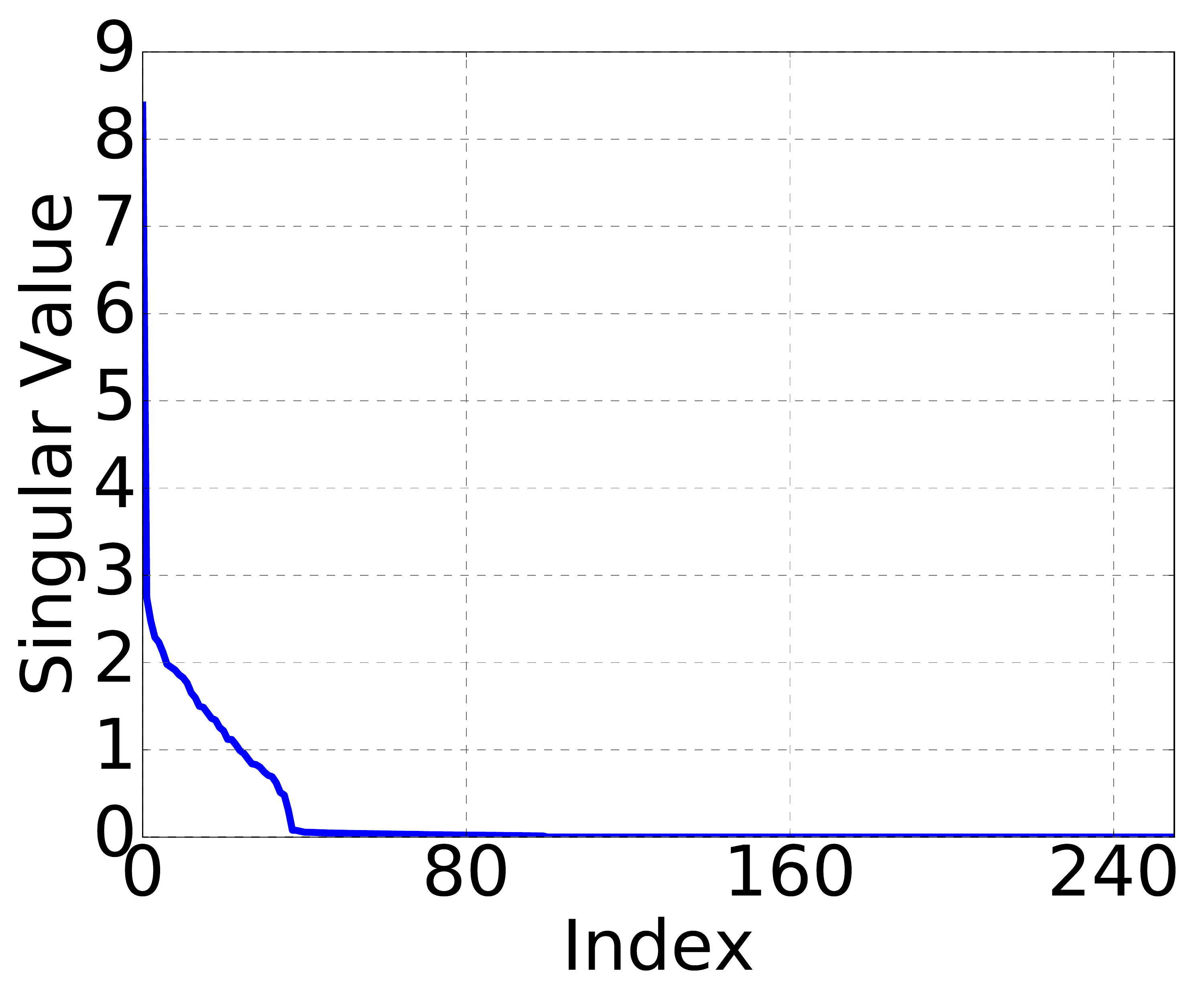}\\
\hskip -0.2cm\includegraphics[clip, trim=0.01cm 0.01cm 0.01cm 0.01cm, width=0.235\columnwidth]{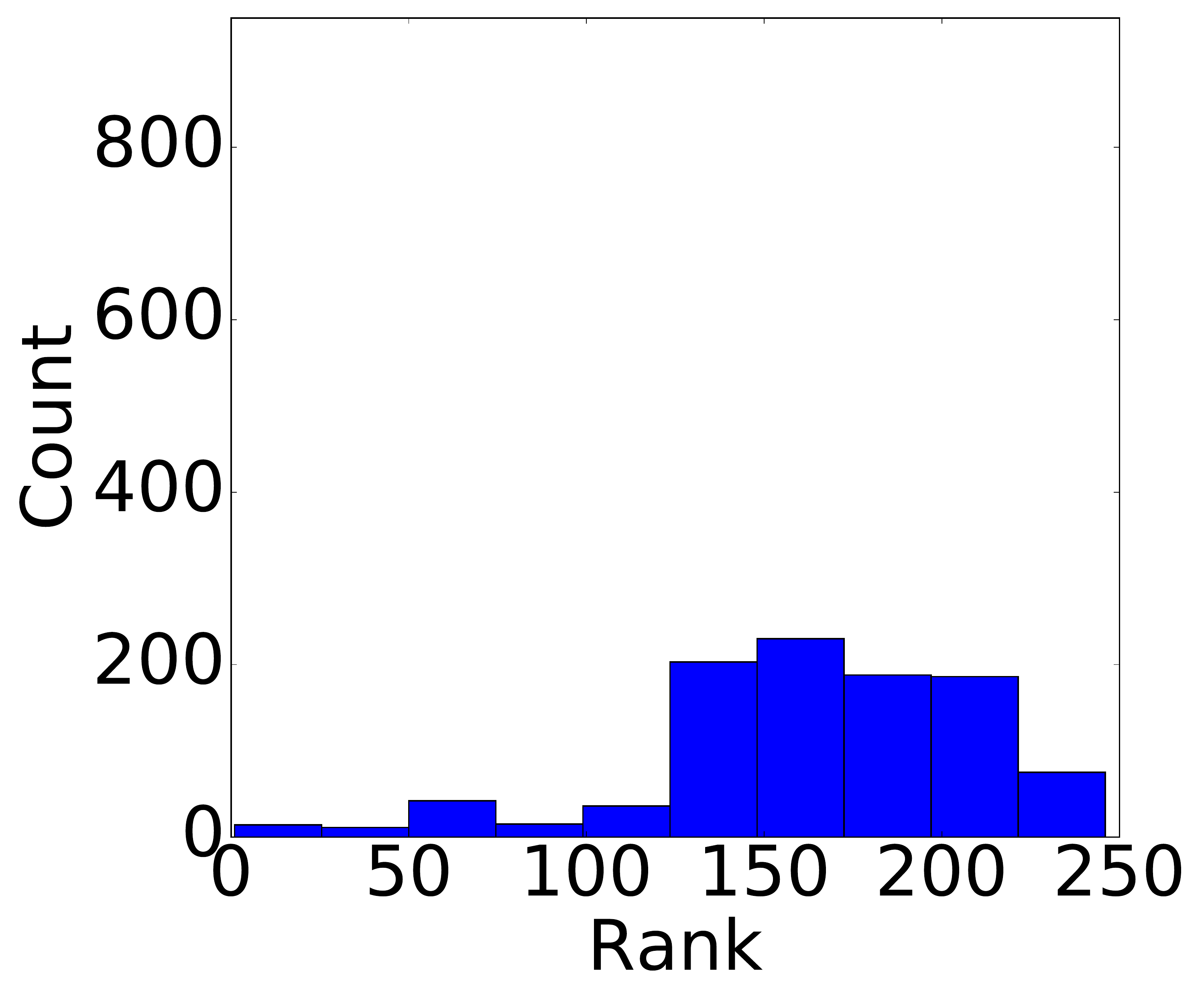}&
\hskip -0.2cm\includegraphics[clip, trim=0.01cm 0.01cm 0.01cm 0.01cm, width=0.235\columnwidth]{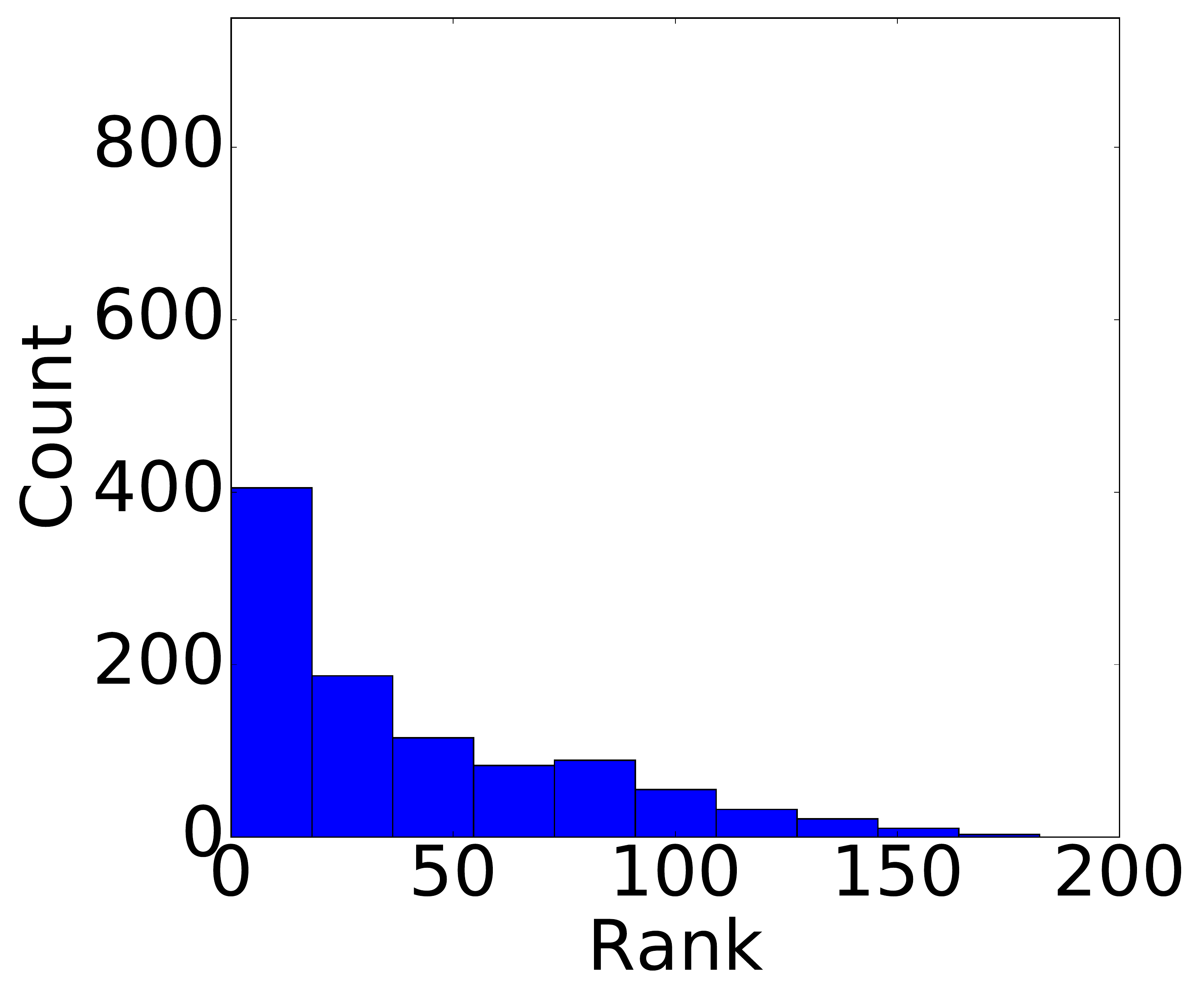}&
\hskip -0.2cm\includegraphics[clip, trim=0.01cm 0.01cm 0.01cm 0.01cm, width=0.235\columnwidth]{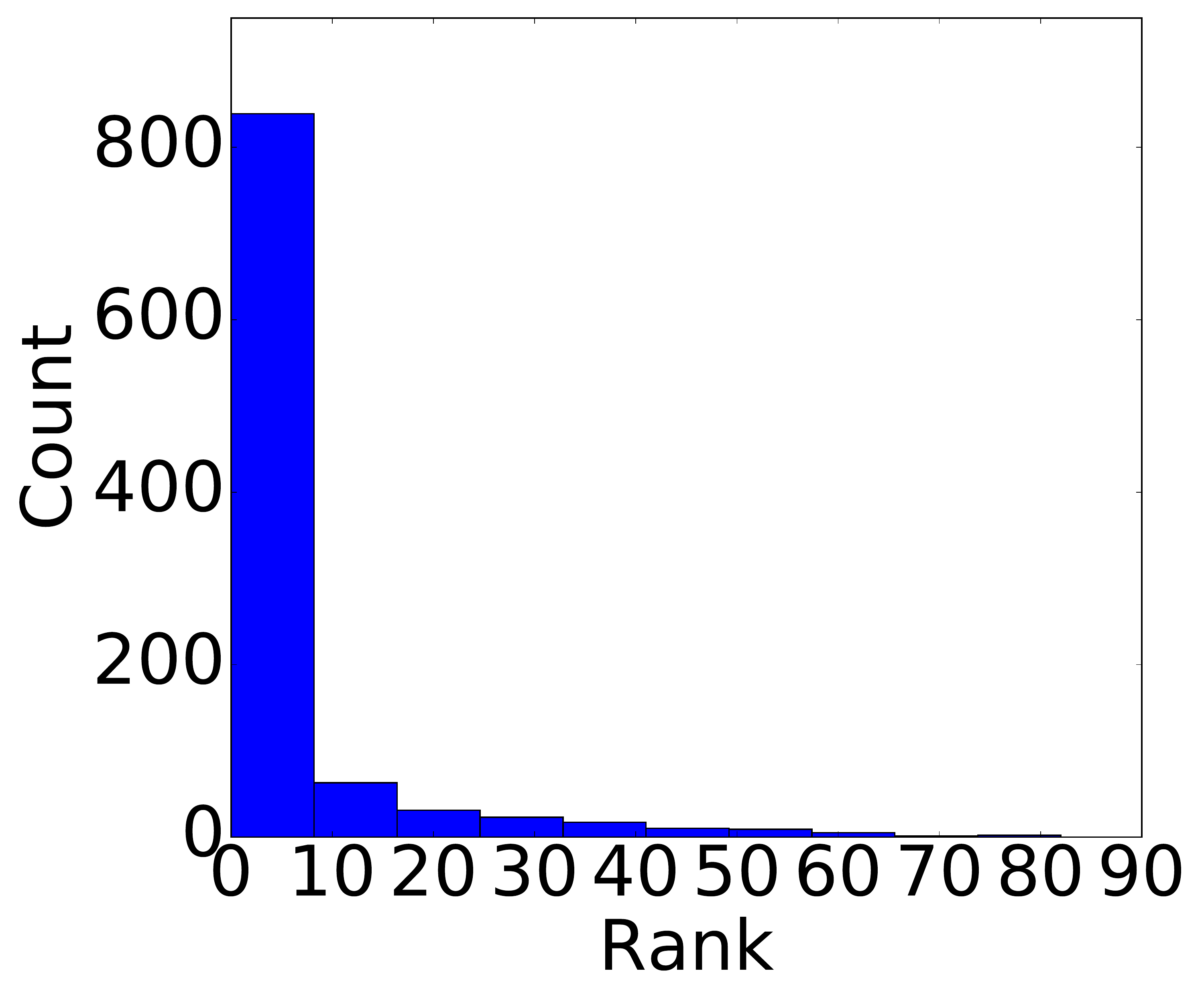}&
\hskip -0.2cm\includegraphics[clip, trim=0.01cm 0.01cm 0.01cm 0.01cm, width=0.235\columnwidth]{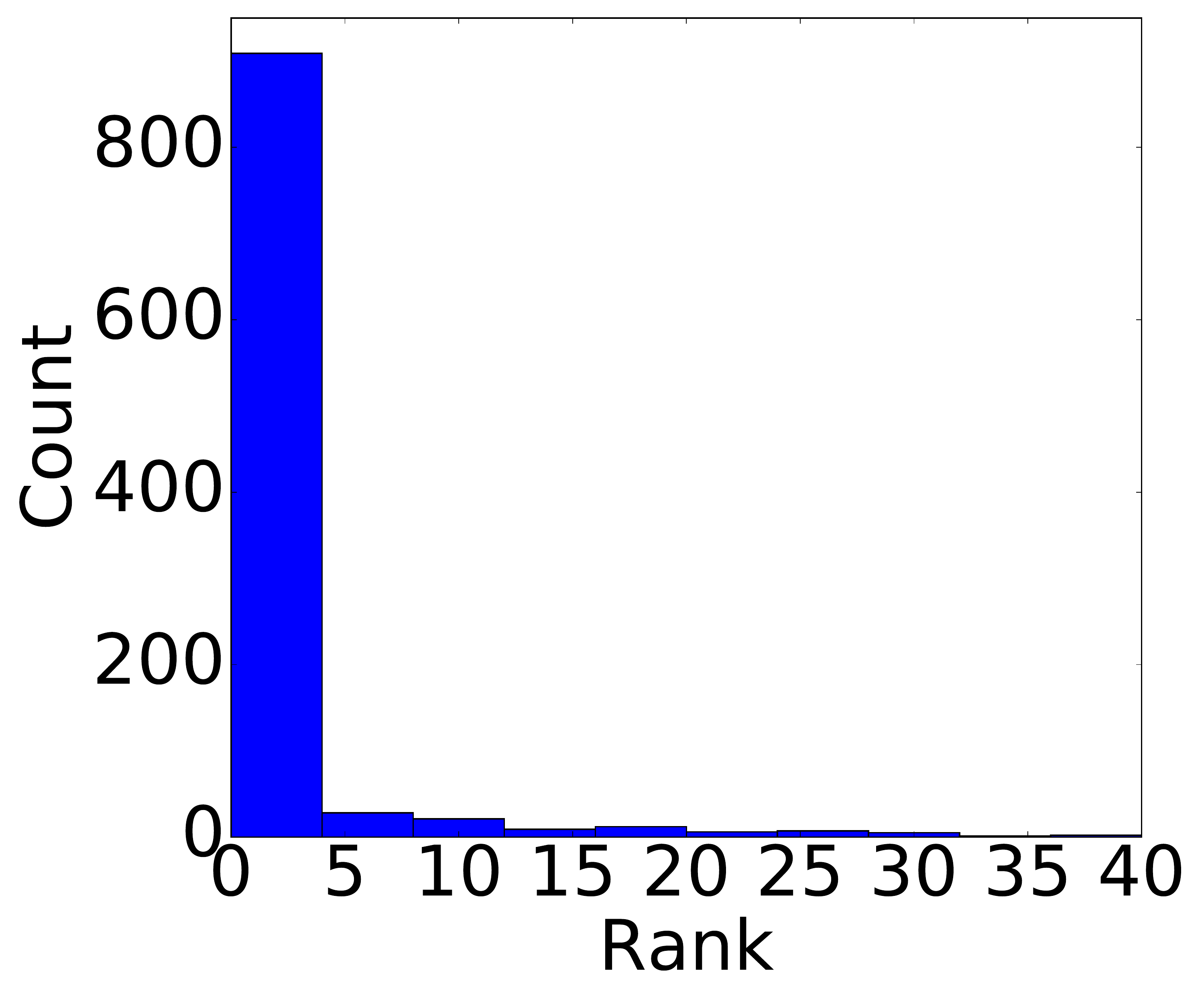}\\
\end{tabular}
\vspace{-3mm}
\caption{First row: plot of two randomly selected attention matrices (left) and their singular values (right) from the transformer trained for WikiText-103 language modeling; see Section~\ref{sec:exp} for details. Second row: distributions of the rank of randomly selected 1000 attention matrices, from the same transformer, after removing a banded matrix $\mD$ of bandwidth 0 (not remove anything from the matrix $\mA$), 5, 10, and 20 (from left to right). Matrix $\mA-\mD$ is of low rank, and the rank becomes smaller in general when the bandwidth of $\mD$ increases.}
\label{fig:low-rank-structure-analysis}
\end{figure}
\subsection{Sparse and low-rank patterns in attention maps}
In this section, we explore the sparse and low-rank structure of the attention matrix $\mA$. In particular, we consider the attention matrix $\mA\in \RR^{256\times 256}$ obtained from the standard transformer trained for WikiText-103 language modeling; see Section~\ref{subsec-WikiText} for the experimental details. We randomly select 1000 different attention matrices, and we exclude a banded matrix $\mD$ with bandwidth 5, 10, and 20 from each of such matrices. Then, we perform singular value decomposition (SVD) to compute the rank of each matrix $\mA-\mD$, and we threshold the small singular values with a magnitude of $10^{-6}$. Figure~\ref{fig:low-rank-structure-analysis} (top row) plots two randomly selected self-attention matrices and the distribution of the rank of the matrix $\mA-\mD$. It is clear that matrix $\mA$ has only a few large singular values and all other singular values are very small. Moreover, matrix $\mA-\mD$ is of low rank, and the rank becomes smaller in general when the bandwidth of $\mD$ increases, which is consistent with the assumptions in Section~\ref{subsec:fmm-Hmatrix}, motivating FMMformers. 

\section{FMMformer: Practical Near-field and Far-field Attention}\label{sec:fmmformer-implementation}
In this section, we present practical algorithms for implementing the proposed FMMformer defined by \eqref{eq:FMMformer-attention}. In particular, we present fast algorithms for computing the near-field attention $\mD\mV$ and the far-field attention $\mL\mV$. 

\subsection{Banded matrix modeling of near-field attention}
We model the near-field attention with the following banded matrix
\begin{equation}\label{eq:banded-k}
\mD = {\rm softmax}\left({\rm band}_k\Big(\frac{{\mQ}{\mK}^\top}{\sqrt{D}}\Big)\right),
\end{equation}
where the operator ${\rm band}_k({*})$ represents taking only the banded part of the matrix $*$ with a bandwidth $k$ ($k\ll N$). In practice, there is no need to calculate the matrix product $\mQ\mK^\top$. Instead, we only need to calculate the products of the vectors that correspond to the nonzero entries of the banded matrix ${\rm band}_k({{\mQ}{\mK}^\top}/{\sqrt{D}})$. Note that for long sequences, both the time and memory complexity of computing \eqref{eq:banded-k} are $\mathcal{O}(N)$.

\subsection{Low-rank matrix modeling of far-field attention}
We consider practical and efficient low-rank matrix modeling of the far-field attention $\mL\mV$ in \eqref{eq:attention-vec}. In principle, any existing off-the-shelf low-rank attention can be integrated into FMMformer to model the far-field attention. In particular, we model the far-field attention leveraging the kernel trick used in ~\cite{katharopoulos2020transformers,performer,schlag2021linear}, which is flexible in selecting different kernels to modulate the rank of the far-field attention component. 

\subsubsection{Low-rank attention via kernelization} 
Suppose we model the far-field attention using a rank $r$ matrix $\mL \in \RR^{N\times N}$, which can be written as the sum of $r$ rank one matrices, i.e.,
\begin{equation}\label{eq:rank-one-decomposition}
\mL = \va_1\vb_1^\top + \va_2\vb_2^\top + \cdots + \va_r\vb_r^\top,
\end{equation}
where $\va_1,\va_2,\cdots,\va_r; \vb_1,\vb_2,\cdots,\vb_r\in \RR^N$. Note that 
\begin{equation}\label{eq:regroup}
\mL\mV = (\va_1\vb_1^\top+\va_2\vb_2^\top+\cdots +\va_r\vb_r^\top)\mV = \va_1(\vb_1^\top\mV) + \va_2(\vb_2^\top\mV) + \cdots + \va_r(\vb_r^\top\mV),
\end{equation}
which indicates that we can compute $\mL\mV$ with $\mathcal{O}(N)$ time complexity using the fact that $\mL\mV=\va_1(\vb_1^\top\mV) + \va_2(\vb_2^\top\mV) + \cdots + \va_r(\vb_r^\top\mV)$.
Also, we only need to store the vectors ${\bf u}_1,{\bf u}_2,\cdots,{\bf u}_r$; ${\bf v}_1,{\bf v}_2,\cdots,{\bf v}_r \in \RR^N$, resulting in linear complexity in memory footprint. 

We borrow the idea of kernelization from the linear transformer \cite{katharopoulos2020transformers} for practical implementation of \eqref{eq:regroup}. In particular, the authors in \cite{katharopoulos2020transformers} generalize the softmax function in \eqref{eq:attention-vec} to a general kernel function $k(\vq_i,\vk_j)$, i.e.,
\begin{equation}\label{eq:attention-kernel}
\underbrace{\hat{\vv}_i=\frac{\sum_{j=1}^N \exp({\vq}_i,{\vk}_j){\vv}_j }{\sum_{j=1}^N \exp({\vq}_i,{\vk}_j)}}_{\mbox{self-attention}}\footnote{Here, $\exp(\vq_i,\vk_j):=\exp(\vq_i^\top\vk_j/\sqrt{D})$.}\Longrightarrow \underbrace{\hat{\vv}_i=\frac{\sum_{j=1}^N k({\vq}_i,{\vk}_j){\vv}_j }{\sum_{j=1}^N k({\vq}_i,{\vk}_j)}}_{\mbox{generalized self-attention}}.
\end{equation}
Under certain assumptions in \cite{mercer1909xvi}, we can linearize the generalized self-attention in \eqref{eq:attention-kernel} as follows,
\begin{equation}\label{eq:attention-kernel-linear}
\hat{\vv}_i=\frac{\sum_{j=1}^N k({\vq}_i,{\vk}_j){\vv}_j }{\sum_{j=1}^N k({\vq}_i,{\vk}_j)}=\frac{\sum_{j=1}^N\phi({\vq}_i)^\top\phi({\vk}_j){\vv}_j}{\sum_{j=1}^N\phi({\vq}_i)^\top\phi({\vk}_j)} = \frac{ \phi({\vq}_i)^\top\sum_{j=1}^N\phi({\vk}_j){\vv}_j^\top }{\phi({\vq}_i)^\top\sum_{j=1}^N\phi({\vk}_j) },
\end{equation}
where $\phi(\cdot)$ is a feature map function. Note that \eqref{eq:attention-kernel-linear} can be regarded as a rank one approximation of self-attention. We can rewrite \eqref{eq:attention-kernel-linear} into the following compact form
\begin{equation}\label{eq:attention-kernel-linear-matrix}
\Hat{\mV}=\frac{\phi({\mQ})(\phi({\mK})^\top{\mV} )}{\phi({\mQ})\phi({\mK})^\top}.
\end{equation}
To generalize \eqref{eq:attention-kernel-linear} to the rank $r$ approximation, we select a set of linearly independent feature maps $\{\phi_l(\cdot)\}_{l=1}^r$. Together with the sparse banded matrix modeling of the near-field attention, we propose the following efficient attention model for the FMMformer
\begin{equation}\label{eq:rank-k-sparse-attention}
\Hat{\mV} = {\mD}{\mV} + \sum_{l=1}^r\frac{\phi_l({\mQ})(\phi_l({\mK})^\top{\mV}) }{\phi_l({\mQ})\phi_l({\mK})^\top}.
\end{equation}
It is evident that both computational time and memory complexity are linear in computing \eqref{eq:rank-k-sparse-attention}. Our design is flexible to selecting feature maps and the sparse banded matrix, which the users can customize. Moreover, causal masking can be implemented easily by truncating the sum from $1$ to $i$ in \eqref{eq:attention-kernel-linear} together with masking out the corresponding part of the banded matrix ${\bf D}$.

\begin{proposition}\label{prop:rank-k}
Let $\phi_l(\vx)\in \RR^N$ ($l=1,2,\cdots,r$ and $r\ll N$) for $\vx\in \RR^n$. If $\{\phi_l(\vx)\}_{l=1}^r$ are linearly independent at $\vx$, then the following matrix ${\mL}(\vx)\in \RR^{N\times N}$ has rank $r$,
\begin{equation}\label{eq:rank-k-prop}
{\mL}(\vx) := \phi_1(\vx)\phi_1(\vx)^\top + \phi_2(\vx)\phi_2(\vx)^\top + \cdots + \phi_r(\vx)\phi_r(\vx)^\top.
\end{equation}
\end{proposition}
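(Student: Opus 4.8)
The plan is to recognize $\mL(\vx)$ as a single outer product of a tall matrix with its transpose and then invoke the standard identity $\mathrm{rank}(\bm{\Phi}\bm{\Phi}^\top)=\mathrm{rank}(\bm{\Phi})$. Concretely, I would collect the $r$ feature vectors as the columns of
$$
\bm{\Phi}(\vx) := [\,\phi_1(\vx),\ \phi_2(\vx),\ \cdots,\ \phi_r(\vx)\,]\in\RR^{N\times r},
$$
so that the defining sum in \eqref{eq:rank-k-prop} collapses to $\mL(\vx)=\bm{\Phi}(\vx)\bm{\Phi}(\vx)^\top$. The hypothesis that $\{\phi_l(\vx)\}_{l=1}^r$ is linearly independent is exactly the statement that these columns are linearly independent, i.e. $\mathrm{rank}(\bm{\Phi}(\vx))=r$; this is consistent since $r\ll N$ leaves room for $r$ independent vectors in $\RR^N$. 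It then suffices to show $\mathrm{rank}(\mL(\vx))=\mathrm{rank}(\bm{\Phi}(\vx))$.

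For that identity I would argue at the level of null spaces, which is the only step requiring a short computation. Writing $\bm{\Phi}=\bm{\Phi}(\vx)$ for brevity, I claim $\ker(\bm{\Phi}\bm{\Phi}^\top)=\ker(\bm{\Phi}^\top)$. The inclusion $\ker(\bm{\Phi}^\top)\subseteq\ker(\bm{\Phi}\bm{\Phi}^\top)$ is immediate. For the reverse, if $\bm{\Phi}\bm{\Phi}^\top\vz=\vzero$ then $\vz^\top\bm{\Phi}\bm{\Phi}^\top\vz=\norm{\bm{\Phi}^\top\vz}^2=0$, forcing $\bm{\Phi}^\top\vz=\vzero$; this positive-semidefiniteness argument is where the symmetric structure of $\bm{\Phi}\bm{\Phi}^\top$ is used. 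Hence the two null spaces coincide, and since both $\bm{\Phi}\bm{\Phi}^\top$ and $\bm{\Phi}^\top$ act on $\RR^N$, the rank-nullity theorem gives $\mathrm{rank}(\bm{\Phi}\bm{\Phi}^\top)=\mathrm{rank}(\bm{\Phi}^\top)=\mathrm{rank}(\bm{\Phi})=r$, completing the argument.

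As a cross-check on the upper bound, I would also observe that each summand $\phi_l(\vx)\phi_l(\vx)^\top$ is a rank-one matrix (each $\phi_l(\vx)$ is nonzero, since a linearly independent family cannot contain the zero vector), so subadditivity of rank already yields $\mathrm{rank}(\mL(\vx))\le r$; the real content of the proposition is the matching lower bound, which the null-space identity delivers. I do not expect a genuine obstacle here: the statement is a routine fact of linear algebra, and the only place demanding care is correctly handling the rank equality for the non-square factor $\bm{\Phi}$ rather than for a square matrix, which the kernel argument above resolves cleanly.
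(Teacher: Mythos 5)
Your proposal is correct and follows essentially the same route as the paper's proof: collect the feature vectors into a tall matrix $\bm{\Phi}(\vx)\in\RR^{N\times r}$, write $\mL(\vx)=\bm{\Phi}(\vx)\bm{\Phi}(\vx)^\top$, and conclude ${\rm rank}\,\mL(\vx)={\rm rank}\,\bm{\Phi}(\vx)=r$ from linear independence of the columns. The only difference is that you additionally prove the Gram-matrix rank identity ${\rm rank}(\bm{\Phi}\bm{\Phi}^\top)={\rm rank}(\bm{\Phi})$ via the kernel/positive-semidefiniteness argument, a standard fact the paper simply invokes.
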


\paragraph{Feature map selection.} The feature map selection is crucial for the success of far-field attention modeling. In this work, we adopt the existing successful feature map $\phi_1(\vx):={\rm elu}(\vx)+1$ used in the linear transformer \cite{katharopoulos2020transformers} together with $\phi_2(\vx):={\rm elu}(-\vx)+1$, which is a straightforward modification of $\phi_1(\vx)$. Moreover, we consider the third feature map $\phi_3(\vx):={\rm tanh}(\vx)$. It is easy to check that $\phi_1(\vx),\phi_2(\vx)$, and $\phi_3(\vx)$ are linearly independent for almost all $\vx$. We leave how to design a set of feature maps to optimize the far-field attention modeling as future work.

\subsection{Blending of near-field and far-field attention}
Based on our experiments, adding a learnable weight in front of each attention component benefits training and generalization. As such, we propose the following scheme to blend the near-field attention and far-field attention
\begin{equation}\label{eq:FMMformer-attention-blending}
\Hat{\mV} :=(w_1\mD+w_2\mL)\mV,
\end{equation}
where $w_1$ and $w_2$ are two learnable weights, and we enforce their positivity via a ${\rm sigmoid}$ map.

\section{Experimental Results}\label{sec:exp}
In this section, we numerically verify the efficiency of FMMformers and empirically analyze the effects of near-field and far-field attention on various benchmarks, including synthetic sequence copy (Section~\ref{subsec:copy-task}), Long Range Arena (LRA) (Section~\ref{subsec-LRA}), and language modeling (Section~\ref{subsec-WikiText}). We aim to show that: (i) FMMformers are efficient in both computational time and memory footprint. (ii) Multiple kernels benefit learning of the far-field attention. (iii) Blending near-field attention with far-field attention can boost the performance of linear transformers. Throughout this section, we compare FMMformers with linear transformers ({\bf linear}, $r=1$ in \eqref{eq:rank-k-prop}), standard softmax transformers ({\bf softmax}), and softmax transformers that use a banded attention matrix of bandwidth $k$ ({\bf band$_{k}$}). All experiments are conducted on a server with 4 NVIDIA 3090TI GPUs.

\subsection{Synthetic sequence copy task}\label{subsec:copy-task}
We first consider a synthetic copy task with various sequence lengths, including 128, 256, and 512. In this task, the model has to duplicate a sequence of symbols. Each training and test sample is a sequence of maximum length 128/256/512 with ten different symbols separated by a dedicated separator symbol. We train all transformers for this task using the same setting as in \cite{katharopoulos2020transformers}. 

\begin{figure}
\begin{center}
\begin{tabular}{ccc}
\hskip-0.2cm\includegraphics[width=0.34\columnwidth]{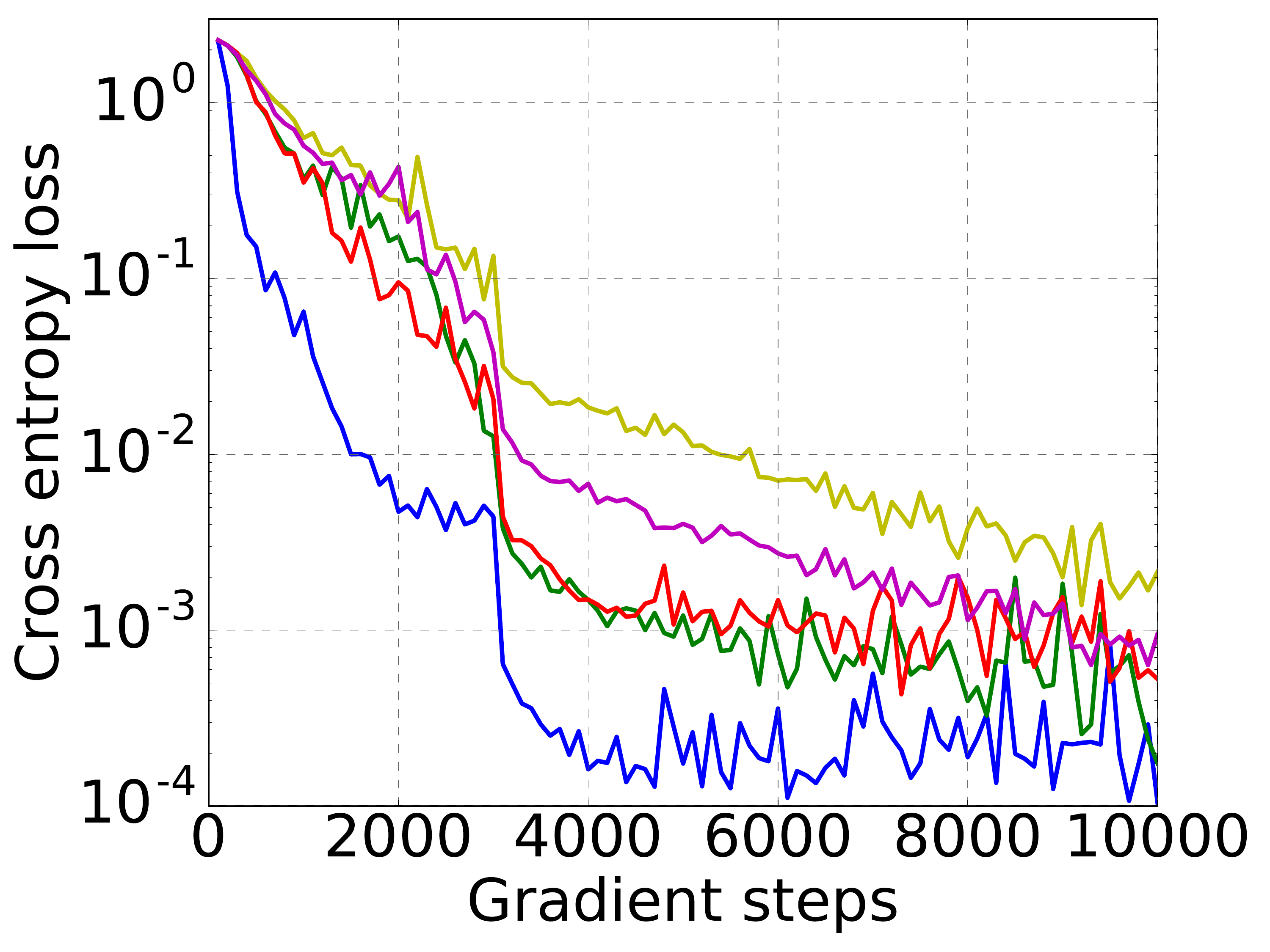} &
\hskip-0.5cm\includegraphics[width=0.34\columnwidth]{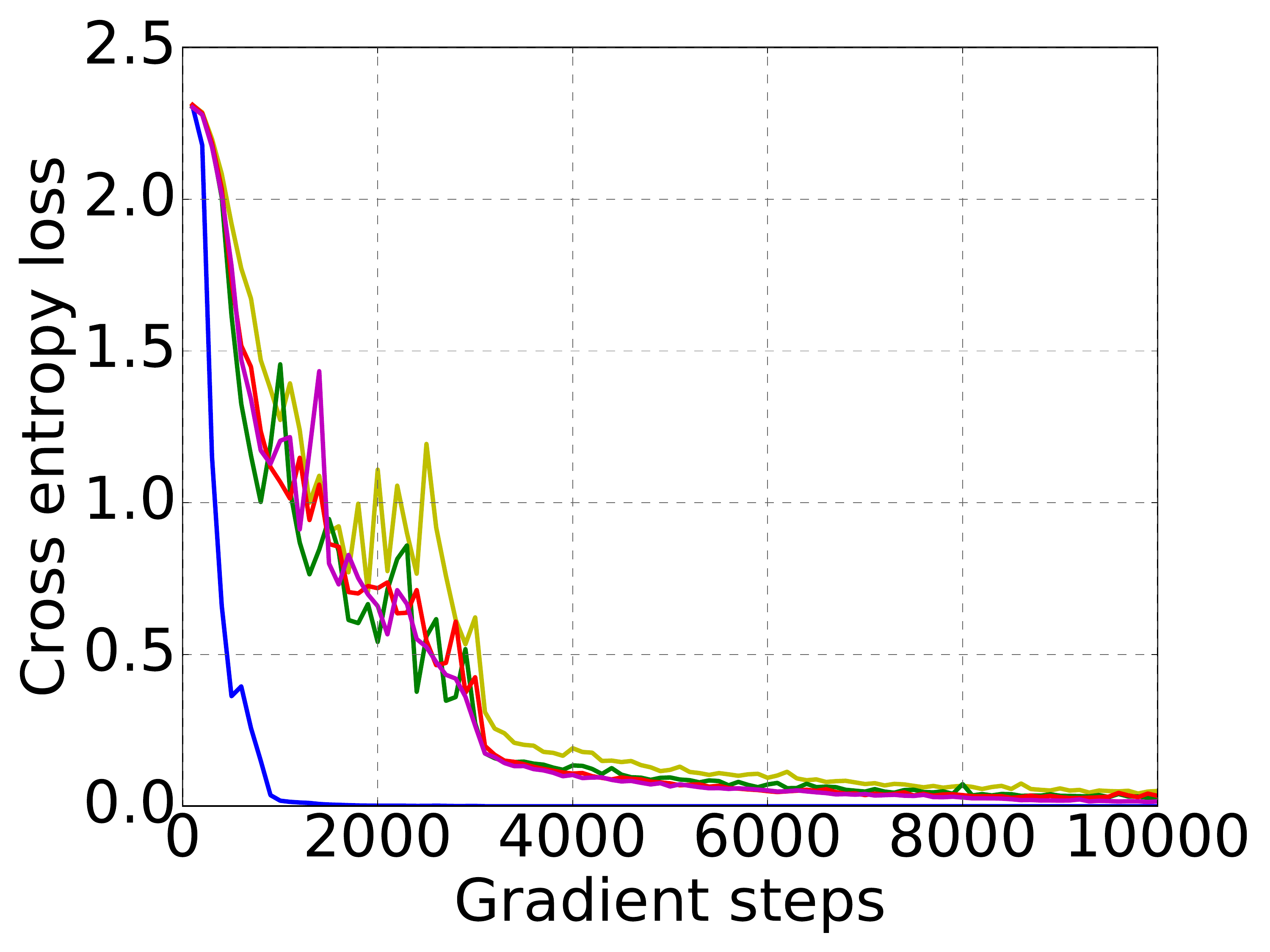}&
\hskip-0.5cm\includegraphics[width=0.34\columnwidth]{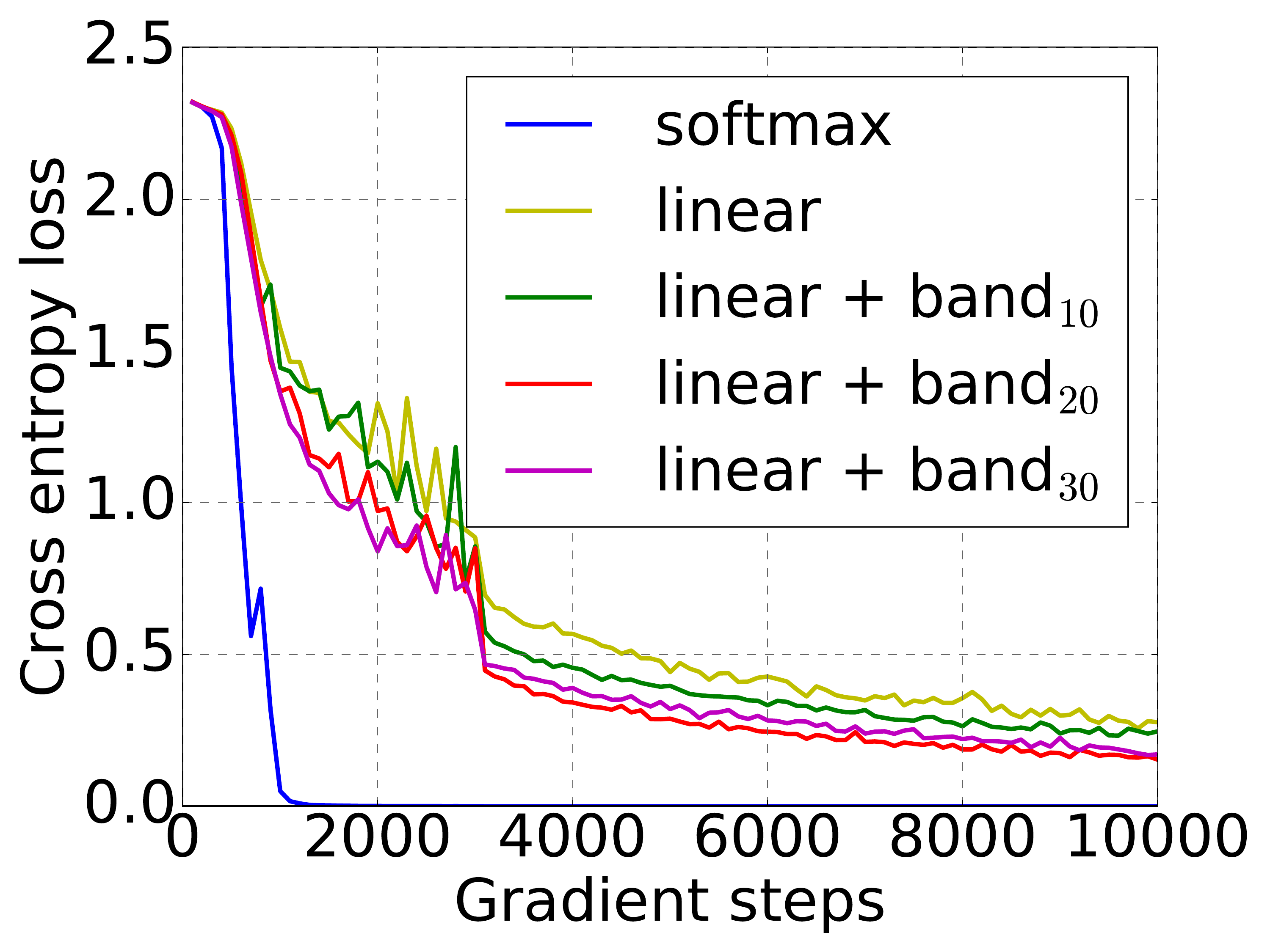}\\
  \end{tabular}
  \end{center}
  \vskip -0.2in
  \caption{ Convergence comparison of softmax, linear, and the blend of linear transformer with a banded matrix on a sequence duplication task with different sequence lengths (left: 128, middle: 256, right: 512). Adding near-field attention into linear attention consistently improves the training for different sequence lengths.
  }\label{fig:copy-task1}
\end{figure}
\paragraph{Boosting performance of linear transformers with near-field attention.}
We first compare FMMformers, obtained by blending the linear transformer with a banded attention matrix of bandwidths 10, 20, and 30, respectively. Figure~\ref{fig:copy-task1} shows that for shorter sequences of length 128, all transformers reach similar loss; the standard softmax transformer converges much faster than the linear transformer while {\em blending the linear transformers with near-field attention can improve training}. Moreover, the benefits of near-field attention become more significant as the sequence length increases. 


\begin{figure}
\begin{center}
\begin{tabular}{ccc}
\hskip-0.2cm\includegraphics[width=0.34\columnwidth]{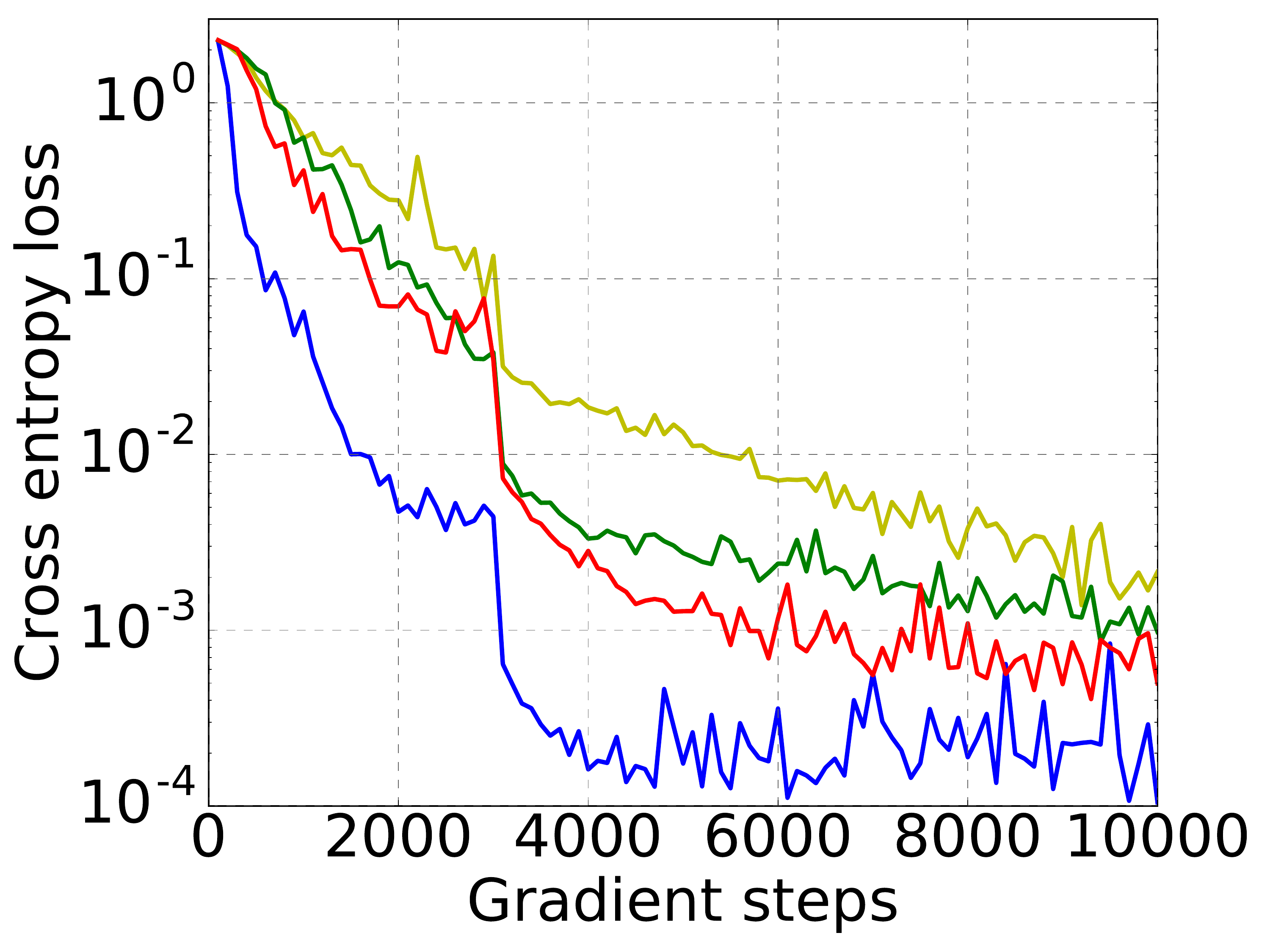} &
\hskip-0.5cm\includegraphics[width=0.34\columnwidth]{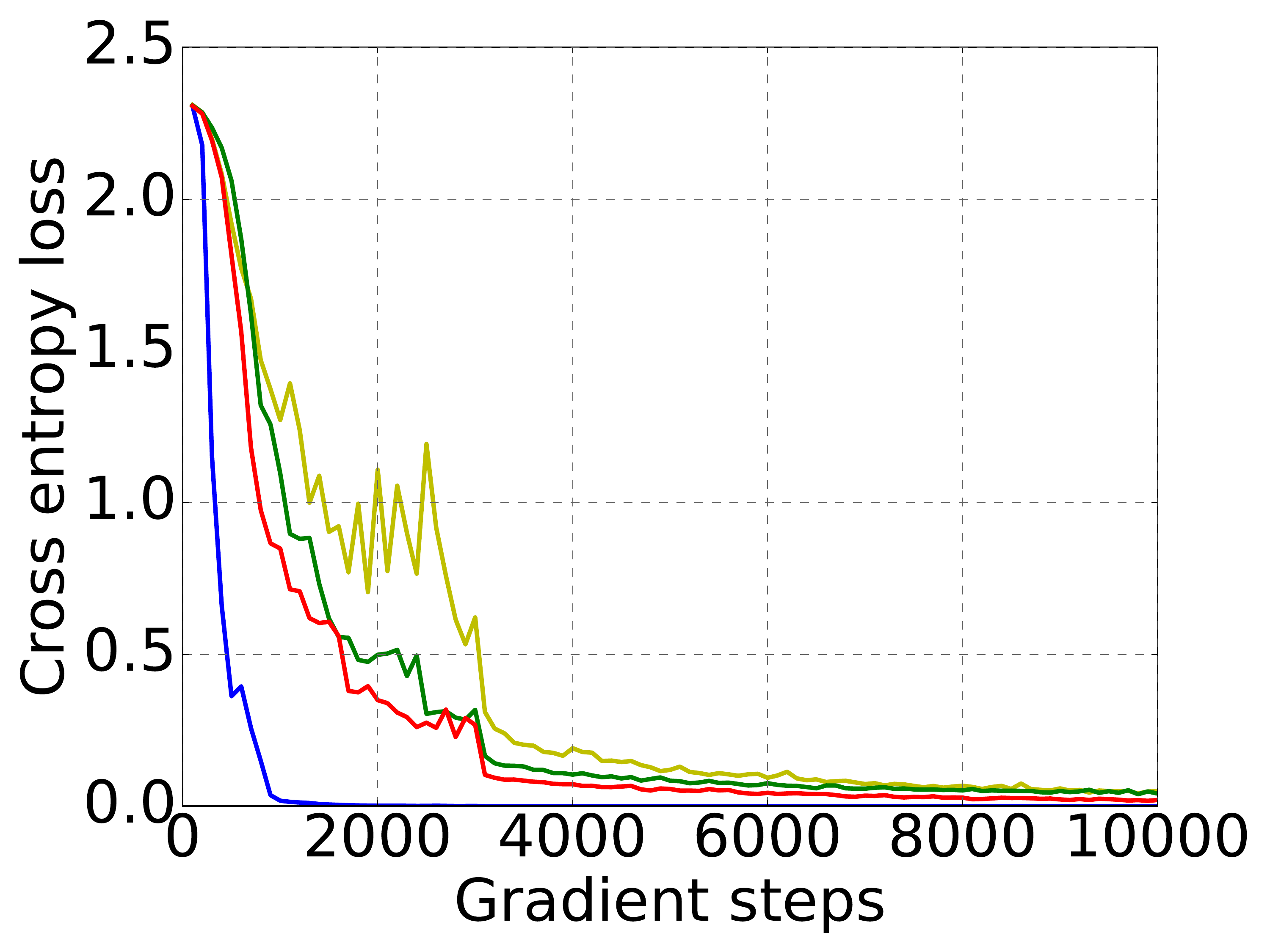}&
\hskip-0.5cm\includegraphics[width=0.34\columnwidth]{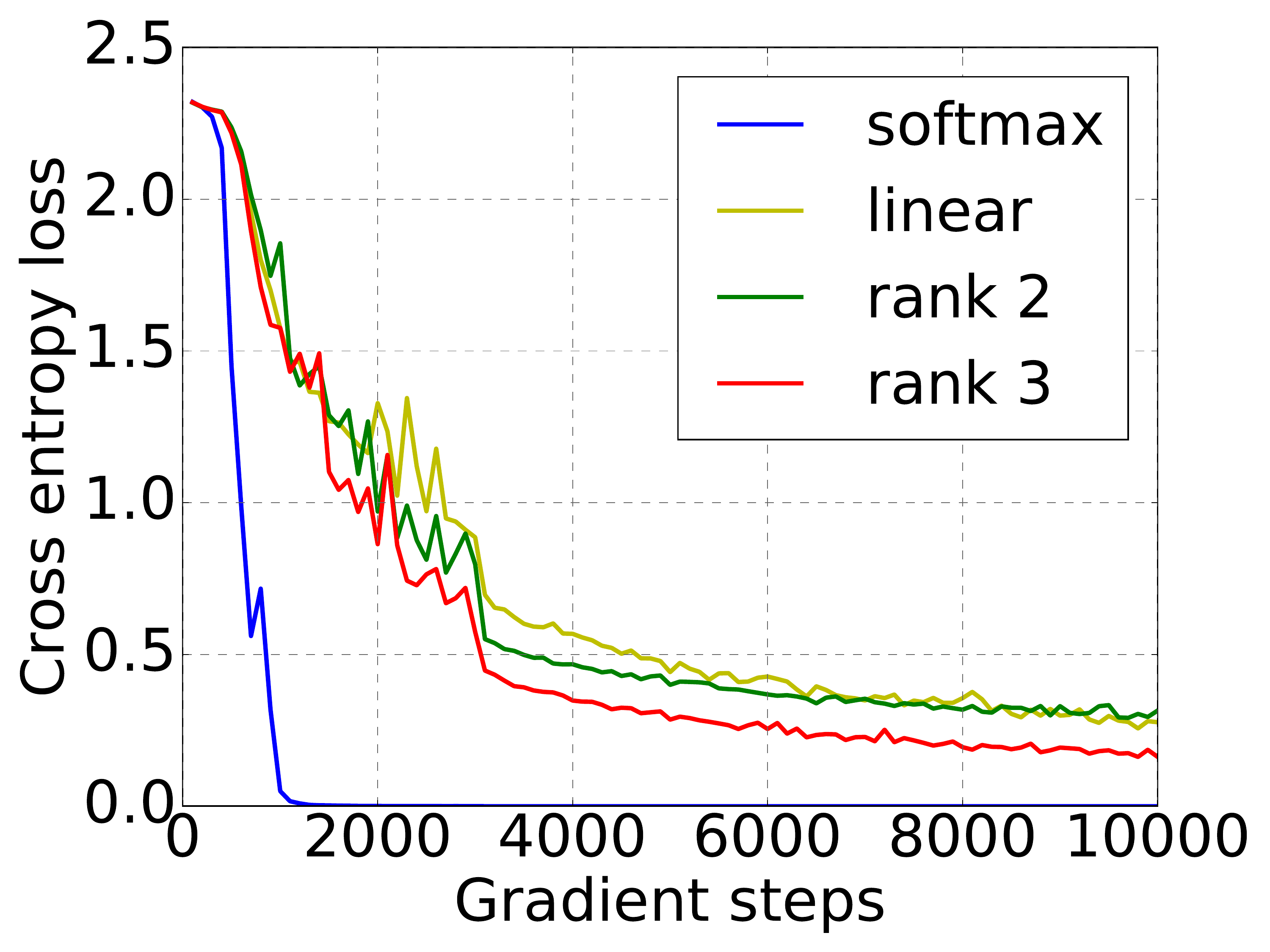}\\
  \end{tabular}
  \end{center}
  \vskip -0.2in
  \caption{ Convergence comparison of softmax, linear, and different low-rank attention on a sequence duplication task with different sequence lengths (left: 128, middle: 256, right: 512). Attention with a higher rank improves training for different sequence lengths.
  }\label{fig:copy-task2}
\end{figure}

\paragraph{Enhancing far-field attention with multi-kernels.} After observing that the linear transformer performs poorly as the sequence length increases, we consider augmenting the linear transformer with multiple feature maps; in particular, we consider the three feature maps mentioned above, i.e., $\phi_1(\vx)={\rm elu}(\vx)+1,\phi_2(\vx)={\rm elu}(-\vx)+1$, and $\phi_3(\vx)={\rm tanh}(\vx)$. Figure~\ref{fig:copy-task2} compares different transformers on different sequence lengths, where {\bf rank 2} consists of the feature maps $\phi_1(\vx)$ and $\phi_2(\vx)$, and {\bf rank 3} consists of all three feature maps. These results show that {\em multiple kernels can improve the learning of far-field attention.}

\paragraph{Computational and memory complexity.} In this part, we compare different transformers in computational time and memory cost. Following \cite{katharopoulos2020transformers}, we compute the attention and gradient for input sequences with different lengths $N\in \{2^9,2^{10},\cdots,2^{16}\}$ and measure the peak allocated GPU memory and the required time for each transformer model. We conduct this experiment on an NVIDIA 3090TI with 24GB memory, and we report the time and memory cost per sample in the same way as in \cite{katharopoulos2020transformers}. Figure \ref{fig:copy-task3} contrasts the time (left) and memory (right) costs of different models. 

\begin{figure}
\begin{center}
\begin{tabular}{cc}
\includegraphics[width=0.34\columnwidth]{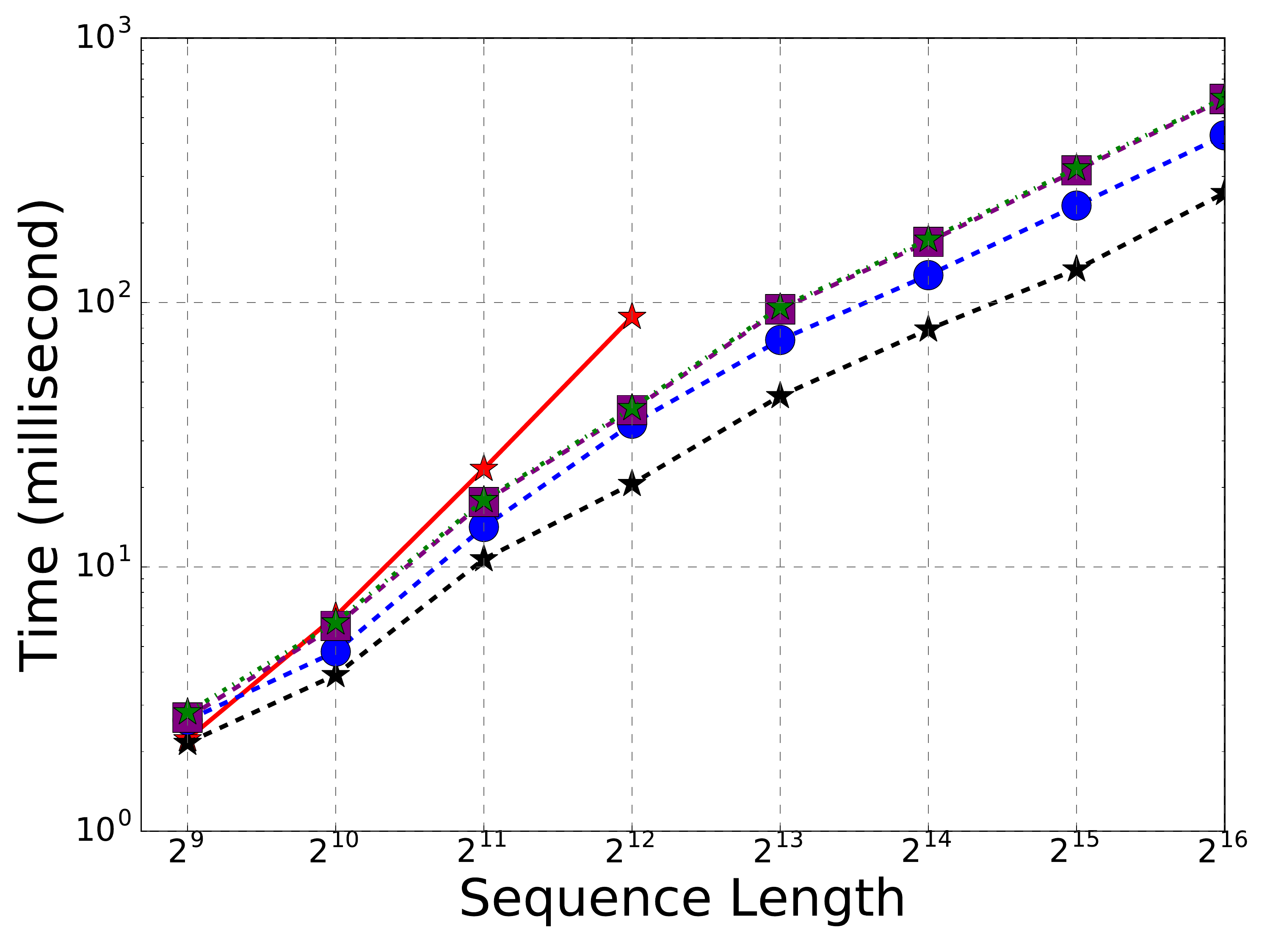} &
\includegraphics[width=0.34\columnwidth]{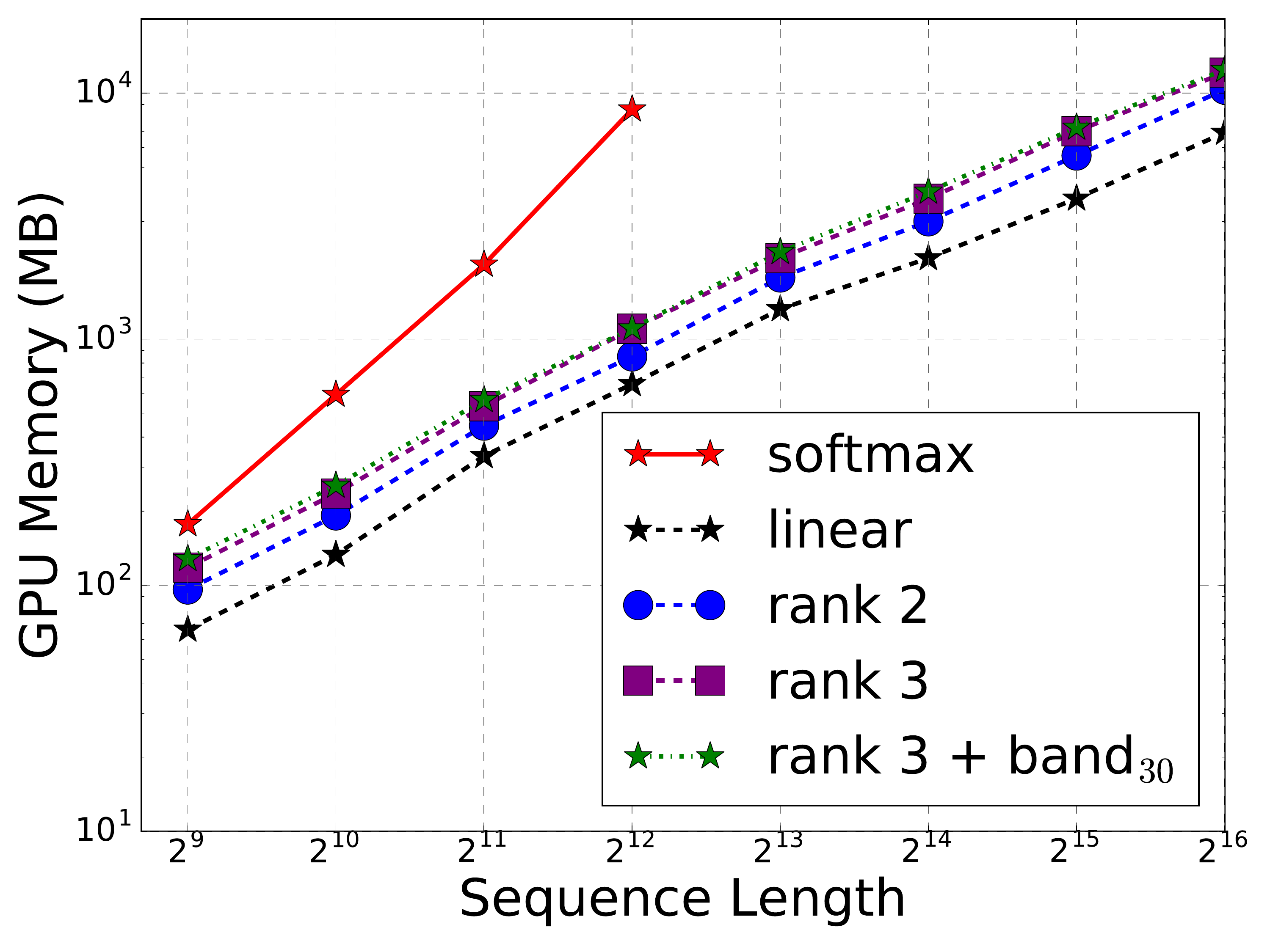}\\
  \end{tabular}
  \end{center}
  \vskip -0.2in
  \caption{ Comparison of the computational time and the peak memory cost of a forward/backward pass for standard softmax transformer, linear transformer, rank 2 linear transformer, rank 3 linear transformer, and the blend of rank 3 linear transformer with a banded attention matrix of bandwidth 30. All transformers are of linear complexity in time and memory except the softmax transformer.
  }\label{fig:copy-task3}
  \vspace*{-0.5em}
\end{figure}

\subsection{Long Range Arena (LRA) Benchmark}\label{subsec-LRA}
In this experiment, we evaluate our model on tasks that involve longer sequence lengths in the Long Range Arena benchmark~\cite{tay2021long}. We show that the FMMformer outperforms the baseline linear transformer and standard softmax transformer~\cite{vaswani2017attention}, justifying the advantage of the FMMformer in capturing long-term dependency. We provide model and training details in the Appendix.

{\bf Datasets and metrics.} We consider all five tasks in the LRA benchmark, including Listops~\cite{nangia-bowman-2018-listops}, byte-level IMDb reviews text classification~\cite{maas-etal-2011-learning}, byte-level document retrieval~\cite{radev2013acl}, CIFAR-10 image classification on sequences of pixels~\cite{krizhevsky2009learning}, and Pathfinder~\cite{linsley2018learn}. These tasks involve long sequences of length $2K$, $4K$, $4K$, $1K$, and $1K$, respectively. We follow the setup/evaluation protocol in \cite{tay2021long} and report the test accuracy for individual task and the average result across all tasks.  

{\bf Results.} We summarize our results in Table~\ref{tab:lra}. Like in the copy task, we observe that adding near-field attention modeled by banded attention matrices improves the performance of linear transformers. More interestingly, using bandwidth 5 already yields good results across all LRA tasks while significantly reducing the computational and memory cost of calculating the attention matrix. For example, in the byte-level document retrieval~\cite{radev2013acl} task, a banded matrix with bandwidth 5 only accounts for 0.125\% of the corresponding full attention matrix. The FMMformer with 1 kernel (blending a banded matrix of bandwidth 5 with the linear transformer using feature map $\phi_1(\vx)$) outperforms the linear transformer and yields similar or better results than the standard softmax transformer in all tasks. Furthermore, the FMMformer with 2 kernels (blending a banded attention matrix of bandwidth 5 with the linear transformer using feature maps $\phi_1(\vx)$ and $\phi_2(\vx)$) further improves the FMMformer with 1 kernel, justifying the need of better low-rank approximation for the far-field attention. Across tasks, the FMMformer obtains the best average accuracy. Also, it is worth noting that tasks in the LRA benchmark cover different data modalities include text and images. Good performance of the FMMformer on these tasks demonstrates that the advantages of our model over the linear and standard transformers are consistent across data modalities.

\begin{table*}[!t]
\centering
\resizebox{1.0\linewidth}{!}{
\begin{tabular}{c|c|c|c|c|c|c}
\hline
Model          & ListOps (2K) & Text (4K) & Retrieval (4K) & Image (1K) &  Pathfinder (1K) & Avg \\ \hline\hline
Softmax~\cite{vaswani2017attention} &\bf 37.10 (37.10) & 64.17 (65.02) & 80.71 (79.35) & 39.06 (38.20) & \bf 72.48 (74.16) & 58.70 (58.77) \\
\hline\hline
Linear~\cite{katharopoulos2020transformers}  & 18.30 & 64.22 & 81.37 & 38.29 & 71.17 & 54.67 \\
\hline\hline
Band$_5$ & 32.16 & 66.31 & 79.41 & 43.33 & 67.44 & 57.73 \\
\hline\hline
FMMformer (1-kernel + Band$_5$) & 33.22 & 66.52 & 81.50 & 45.01 & 71.29 & 59.51 \\
FMMformer (2-kernel + Band$_5$) &  36.74 & \bf 67.84 & \bf 81.88 & \bf 45.10 & 72.12 & \bf 60.74 \\
\hline
\end{tabular}}
\hspace{0.1em}
\vspace*{-0.5em}
\caption{Results on the LRA benchmark. We report the test classification accuracy for each task and average accuracy across all tasks. The FMMformer outperforms the linear transformer and attains similar or better results than the standard transformer. Across tasks, the FMMformer achieves the best average accuracy. Also, the FMMformer with 2 kernels enhances the performance of the FMMformer with 1 kernel. The numbers in the parenthesis are from the paper \cite{xiong2021nystromformer}. Note that we use near-field attentions of bandwidth 5 for all FMMformers reported here, and Band$_5$ are softmax transformers with a banded attention matrix of bandwidth 5.
}\label{tab:lra}
\vspace*{-1.0em}
\end{table*}

\subsection{Language Modeling on WikiText-103} 
\label{subsec-WikiText}
Experiments on the copy task in Section~\ref{subsec:copy-task} illustrate the effect of combining near-field and far-field attention. Results on the LRA benchmark in Section~\ref{subsec-LRA} show the ability of our FMMformer to capture very long-term dependency and extend to different data modalities. Now our goal is to confirm the advantage of the FMMformer on a large-scale application. We consider the word-level language modeling task on WikiText-103~\cite{DBLP:conf/iclr/MerityX0S17}.

{\bf Datasets and metrics.} WikiText-103 consists of articles from Wikipedia  and is a dataset with long contextual dependencies. The training set is made up of about $28K$ articles containing $103M$ running words; this corresponds to text blocks of about 3600 words. The validation and test sets are composed of $218K$ and $246K$ running words, respectively. Each of them contains $60$ articles and about $268K$ words. Our experiment follows the standard setting~\cite{DBLP:conf/iclr/MerityX0S17, schlag2021linear} and split the training data into $L$-word independent long segments. For evaluation, we use a batch size of 1, and go through the text sequence with a sliding window of size $L$. We consider only the last position for computing perplexity (PPL) except in the first segment, where all positions are evaluated as in~\cite{al2019character, schlag2021linear}. 

{\bf Results.} Table~\ref{tab:wikitext103} shows the validation and test perplexity of our models versus the linear and standard softmax transformer on WikiText-103. Consistent with previous experiments, the FMMformer outperforms the linear transformer. The standard softmax transformer obtains the best results in this task, but the gap between the FMMformer and the standard transformer is reduced when a larger bandwidth is used for near-field attention in the FMMformer. This is justified by the improvement in terms of PPL of the FMMformer with a near-field attention of bandwidth 20 compared to the FMMformer with a near-field attention of bandwidth 5. Also, FMMformer with 2 kernels ($\phi_1(\vx)$ and $\phi_2(\vx)$) still improves over FMMformer with 1 kernel ($\phi_1(\vx)$). Consider the linear complexity of computational time and memory advantage of FMMformers, the small performance gap of FMMformers to standard softmax transformers can potentially be overcome by using the near-field attention of larger bandwidth and employing more kernels to better capture the far-field attention. 

\bgroup
\renewcommand{\arraystretch}{1.1}
\begin{table}[t!]
    \begin{center}
    \begin{tabular}{ccc}
    \hline
        Method & Valid PPL & Test PPL \\
        \hline
        Softmax~\cite{vaswani2017attention} & 33.15 & 34.29 \\
        \hline\hline
        Linear~\cite{katharopoulos2020transformers}   & 37.27 & 38.40  \\
        \hline\hline
        Band$_5$ & 43.77  & 44.76  \\
        Band$_{20}$ & 38.18 & 39.19 \\
        \hline\hline
        FMMformer (1-kernel + Band$_5$) & 36.27  & 37.29   \\
        FMMformer (1-kernel + Band$_{20}$) & 35.41  & 36.43   \\
        \hline\hline
        FMMformer (2-kernel + Band$_{20}$) & 35.10  & 36.11  \\
        \hline
    \end{tabular}
    \end{center}
    \caption{WikiText-103 language model perplexities of FMMformers compared to the baselines. The number of parameters (40\,M) is almost the same for all models, up to the small difference introduced by additional weights on the far-field attention in FMMformers. FMMformers outperform linear transformers~\cite{katharopoulos2020transformers}. The performance gap compared to softmax transformers is reduced when using a larger bandwidth in near-field attention and more kernels in far-field attention. Note that Band$_5$ and Band$_{20}$ are softmax transformers with a banded attention matrix of bandwidth 5 and 20, respectively.}
    \label{tab:wikitext103}
\vspace*{-0.6em}
\end{table}

\section{Related works.}\label{subsec:related-work}

\paragraph{Low-rank transformers.} Low-rank approximation of the self-attention matrix $\mA$ has been a popular method in reducing the quadratic computational and memory complexity of transformers to linear. Linearized attention that leverages kernelization tricks can be considered as the rank one approximation of the self-attention matrix \cite{wang2020linformer,katharopoulos2020transformers,performer,shen2021efficient}; the choice of the feature map function is crucial for the success of linearized attention. Fast weight memories \cite{schlag2021learning} have been used to improve memory capacity of linearized attention \cite{schlag2021linear}. The Nystr\"om method has also been leveraged for developing efficient attention with linear computational complexity \cite{xiong2021nystromformer}. Many other low-rank attention models exist, e.g., \cite{pmlr-v80-blanc18a,NEURIPS2019_e43739bb,song2021implicit,peng2021random}. FMMformers employ low-rank attention to model far-field attention; in principle, the merits of existing low-rank attention can be integrated into FMMformers.

\paragraph{Sparse transformers.} 
Attention matrices have been enforced with different sparsity patterns to gain efficiency, including fixed sparsity patterns \cite{qiu2019blockwise,pmlr-v80-parmar18a,beltagy2020longformer,ainslie-etal-2020-etc,zaheer2021big}, a combination of different sparsity patterns \cite{child2019generating,ho2019axial}, and data-dependent/learnable sparsity patterns \cite{j.2018generating,wang2020linformer,tay2020synthesizer,pmlr-v119-tay20a,Kitaev2020Reformer,roy-etal-2021-efficient,vyas2020fast}. Informer \cite{haoyietal-informer-2021} is another efficient attention scheme using a sparse query. Note that the existing sparsity pattern can be very complicated, while we adopt a sparse banded matrix to model the near-field attention.

\paragraph{Other efficient transformers.} 
Reformer reduces the computational cost of self-attention to $\mathcal{O}(N\log N)$ via locality-sensitive hashing \cite{Kitaev2020Reformer}. Linformer \cite{wang2020linformer} and Longformer \cite{beltagy2020longformer} obtain the linear complexity using random projection and local window attention, respectively. FNet \cite{leethorp2021fnet} replaces the expensive attention with a simple Fourier transformer-based token mixing scheme. The lambda layer \cite{bello2021lambdanetworks} also belongs to efficient attention. See \cite{tay2020efficient} for a recent review on efficient attention. 

\paragraph{Sparse and low-rank interpretation of FMM.}
The fast multipole method is introduced by Greengard and Rokhlin \cite{greengard1987fast} for the efficient computation of gravitational/electrostatic potentials and fields. Applications of FMM to machine learning can be found in \cite{gray2001n,lee2012distributed,van2014accelerating}. FMM can be generalized algebraically for efficient computation of the dense matrix-vector product. Algebraic counterpart of FMM include $\mathcal H$-matrix \cite{hackbusch1999sparse}, $\mathcal H^2$-matrix \cite{H2matrix,hackbusch2002data}, hierarchically semi-separable (HSS) \cite{chandrasekaran2006fast,xia2010fast}, hierarchically block-separable (HBS) \cite{martinsson2005fast}, and hierarchically off-diagonal low-rank (HODLR) \cite{ambikasaran2013mathcal} matrices. The common feature is to compress the off-diagonal sub-matrices by low-rank approximations.

\section{Concluding Remarks}\label{sec:conclusion}
In this paper, we proposed FMMformers, a class of efficient and flexible transformers with linear time and memory complexity, inspired by the fast multipole method. In FMMformers, we decompose the full attention into near-field and far-field attention; we model the near-field attention with a sparse banded matrix and model the far-field attention using a low-rank matrix leveraging ideas of the linear transformer \cite{katharopoulos2020transformers}. We validate the efficiency of FMMformers on various benchmark tasks, including synthetic sequence copy, LRA benchmark, and WikiText-103 language modeling. Our numerical results show that FMMformers consistently outperform the linear transformer on all benchmarks and outperform the standard softmax transformer on the LRA tasks. In our work, we select linearly independent feature maps to enhance the learning of far-field attention. It is natural to ask how to design a set of feature maps to optimize the performance of FMMformers? Furthermore, we leave the application of FMMformers for improving the vision transformer \cite{dosovitskiy2020image,touvron2020deit} as future work.

\clearpage


\clearpage

\section{Proof of Lemma~\ref{lemma-lowrank-approx}}
\begin{proof}[Proof of Lemma~\ref{lemma-lowrank-approx}]
For simplicity, we consider the scalar case, i.e., $\vq_i, \vk_j\in \mathbb R$. The vector case can be handled by switching to the polar coordinate and expand in the spherical harmonic basis. 

Let $r = \vk_j - \vk^*/|\vq_i - \vk^*|$. Then $|\vq_i - \vk_j| = |\vq_i - \vk^*|(1+ r)$ and $ |r|\leq \delta$ for $i\in T_1, j\in T_2$. Then
\begin{align*}
\mA(i, j) 
&= g( |\vq_i - \vk^*| ) \, g( 1+ r )\\
&= g(|\vq_i - \vk^*| )\sum _{m=0}^p\frac{g^{(m)}(1)}{m!}\frac{(\vk_j - \vk^*)^m}{|\vq_i - \vk^*|^m} + \mathcal O(\delta ^{p+1})\\
& = \sum _{m=0}^p a_{i m}(\vq_i - \vk^*) b_{m j}(\vk_j-\vk^*) + \mathcal O(\delta ^{p+1}),
\end{align*}
where 
$$
a_{im}(\vq_i - \vk^*) = \frac{g(|\vq_i - \vk^*| )}{|\vq_i - \vk^*|^m}, \quad b_m (\vk_j - \vk^*)= \frac{g^{(m)}(1)}{m!}|\vk_j - \vk^*|^m.
$$
Let $\mU = (a_{im})$ and $\mV = (b_{jm})$. Then the desired result follows. 
\end{proof}

\section{Proof of Proposition~\ref{prop:rank-k}}
\begin{proof}[Proof of Proposition~\ref{prop:rank-k}]
Note that 
$$
\mL(\vx) = \phi_1(\vx)\phi_1(\vx)^\top + \phi_2(\vx)\phi_2(\vx)^\top + \cdots + \phi_r(\vx)\phi_r(\vx)^\top
$$
can be rewritten as
$$
\mL(\vx) = \mA(\vx)\mA(\vx)^\top,
$$
where $\mA(\vx)=[\phi_1(\vx),\phi_2(\vx),\cdots,\phi_r(\vx)]$. It is clear that ${\rm rank} \mA(\vx)=r$ since the columns of $\mA(\vx)$ are linearly independent. Therefore, ${\rm rank} \mL(\vx)=
{\rm rank}\mA(\vx)\mA(\vx)^\top = {\rm rank}\mA(\vx)=r.
$
\end{proof}

\section{Experimental Details}
Below we provide model and training details for our experiments in Section~\ref{sec:exp}. We use equation \eqref{eq:FMMformer-attention-blending} to blend the near-field and far-field attention and initialize the blending weights $w_1$ and $w_2$ to all-zero and all-one matrices, respectively. 

\subsection{Long Range Arena (LRA) Benchmark}
Our baselines consist of the linear transformer~\cite{katharopoulos2020transformers}, the vanilla standard softmax transformer~\cite{vaswani2017attention}, and the same vanilla standard softmax transformer but using a banded attention matrix of bandwidth 5. All models have 2 layers, 64 embedding dimension, 128 hidden dimension, 2 attention heads. Mean pooling is applied in all models. Also, we use the nonlinear activation ${\rm elu}(\vx) + 1$ for the linear transformer. For our models, we use the nonlinear activation ${\rm elu}(\vx) + 1$ for FMMformer 1-kernel and ${\rm elu}(\vx) + 1$ and ${\rm elu}(-\vx) + 1$ for FMMformer 2-kernel. As mentioned in Section~\ref{sec:exp}, in all FMMformer models, we use a banded attention matrix with bandwidth 5. 

Details on the Long Range Arena (LRA) benchmarks are given in the original paper~\cite{tay2021long}. Our implementation uses the public code by ~\cite{xiong2021nystromformer} as a starting point, and we follow their training procedures. The training setting and additional baseline model details are provided in the configuration file used in~\cite{xiong2021nystromformer} and can be found at \textcolor{blue}{\href{https://github.com/mlpen/Nystromformer/blob/main/LRA/code/lra_config.py}{https://github.com/mlpen/Nystromformer/blob/main/LRA/code}}. 

\subsection{Language Modeling on WikiText-103} \label{appendix:language-model-details}
Our language modeling implementation is based on the public code by~\cite{schlag2021linear}; we use their small configuration for all models in our experiment. In particular, we set the key, value, and query dimension to 128, and the training and evaluation context length to 256. We also set the number of heads to 8, the feed-forward layer dimension to 2048, and the number of layers to 16. For linear transformers and our FMMformer 1-kernel, we use the ${\rm elu}(\vx) + 1$  nonlinear activation, while for our FMMformer 2-kernel, we use ${\rm elu}(\vx) + 1$ and ${\rm elu}(-\vx) + 1$. Again, in all FMMformers, we use a banded attention matrix with bandwidth 5. 

All models are trained with the batch size of 96 for 120 epochs on two NVIDIA 3090Ti's with 24GB each. We apply 10\% dropout~\cite{hanson1990stochastic, srivastava2014dropout} and use the Adam optimizer~\cite{kingma2014adam} with an initial learning rate of 0.00025 and 2000 steps for learning rate warm-up. 

\section{Fast Weight Update Improves the Modeling of Far-Field Attention}

\begin{table}[t!]
    \begin{center}
    \begin{tabular}{ccc}
    \hline
        Method & Valid PPL & Test PPL \\
        \hline
        Softmax~\cite{vaswani2017attention} & 33.15 & 34.29 \\
        \hline\hline
        Linear~\cite{katharopoulos2020transformers}   & 37.27 & 38.40  \\
        Fast weight~\cite{schlag2021linear} & 35.75 & 36.63 \\
        Fast weight~\cite{schlag2021linear} + Linear~\cite{katharopoulos2020transformers} & 34.78 & 35.95 \\
        \hline\hline
        Band$_{20}$ & 38.18 & 39.19 \\
        \hline\hline
        FMMformer (1-kernel linear + Band$_{20}$) & 35.41  & 36.43   \\
        FMMformer (1-kernel fast weight + Band$_{20}$) & 34.54 &  35.47  \\
        \hline\hline
        FMMformer (2-kernel linear + Band$_{20}$) & 35.10  & 36.11  \\
        FMMformer (2-kernel fast weight + Band$_{20}$) & 34.16 &  34.71  \\
        \hline
    \end{tabular}
    \end{center}
    \caption{WikiText-103 language model perplexities of FMMformers compared to the baselines. The number of parameters (40\,M) is almost the same for all models, up to the small difference introduced by additional weights on the far-field attention in FMMformers. FMMformers outperform linear transformers~\cite{katharopoulos2020transformers}. The performance gap compared to softmax transformers is reduced when using a larger bandwidth in near-field attention and more kernels in far-field attention. Note that Band$_5$ and Band$_{20}$ are softmax transformers with a banded attention matrix of bandwidth 5 and 20, respectively.}
    \label{tab:wikitext103-fastweight}
\vspace*{-0.6em}
\end{table}

In Section~\ref{sec:exp}, we have explored using a linear transformer in our FMMformers to capture far-field attention. Though linear transformer is computational and memory efficient, its limited capacity might hinder the overall performance of FMMformers \cite{schlag2021linear}. As mentioned before, FMMformers are flexible in designing the low-rank matrix for modeling far-field attention. In this section, we investigate the potential of using a better linear model to capture far-field attention, thereby improving the advantage of FMMformers further. In particular, we consider using the linear transformer with a fast weight update proposed in~\cite{schlag2021linear} for modeling the far-field attention. 
We call the linear transformer with a fast weight update a fast weight transformer in this paper. The fast weight transformer is as efficient as the linear transformer but has better performance across popular benchmarks thanks to its better memory capacity compared to the original linear transformer~\cite{schlag2021linear}. We study the language modeling task on WikiText-103 and show that the advantage of the fast weight transformer over the linear transformer still maintains when they are integrated into our FMMformers, thus demonstrating the promise of FMMformers to achieve better performance when designed with powerful models that can capture better near-field and far-field attentions. Table~\ref{tab:wikitext103-fastweight} shows that FMMformers with fast weight attention outperforms FMMformers with linear attention. Furthermore, when using 2 kernels, the FMMformer with fast weight achieves performance close to the standard softmax transformer while being much more efficient since our FMMformer has linear computational and memory complexity. Note that our reproduced results for fast weight transformer alone in Table 3 is worse than the reported results in Table 2 in~\cite{schlag2021linear}. This is because in our fast weight transformer implementation, we use attention normalization so that the attention map of fast weight transformer is at the same scale as the attention map of the standard softmax and linear transformer. According to~\cite{schlag2021linear}, skipping attention normalization can help improve the results of fast weight transformer as shown in Table 3 in their paper. Exploring to use fast weight transformers without attention normalization in FMMformers is rather interesting, and we leave it for future work.

The training and model details in this experiment are the same as those in Section~\ref{appendix:language-model-details} above. When using two kernels in FMMformers, we use a fast weight transformer for one kernel and a linear transformer for another kernel. Both linear and fast weight transformers use ${\rm elu}(\vx) + 1$ for the nonlinear activation.

\section{Training and Validation Curves}

We demonstrate the training evolution of FMMformers compared to the baseline models including the linear and standard softmax transformer in Figure~\ref{fig:train-evolution-lm}. Here we consider the models trained for language modeling on WikiText-103 and show the training evolution in terms of train and validation perplexity during training. Our FMMformers have a faster convergence speed than both the linear transformer and the softmax transformer with a banded attention matrix.

\begin{figure*}[!t]
\centering
\includegraphics[width=\linewidth]{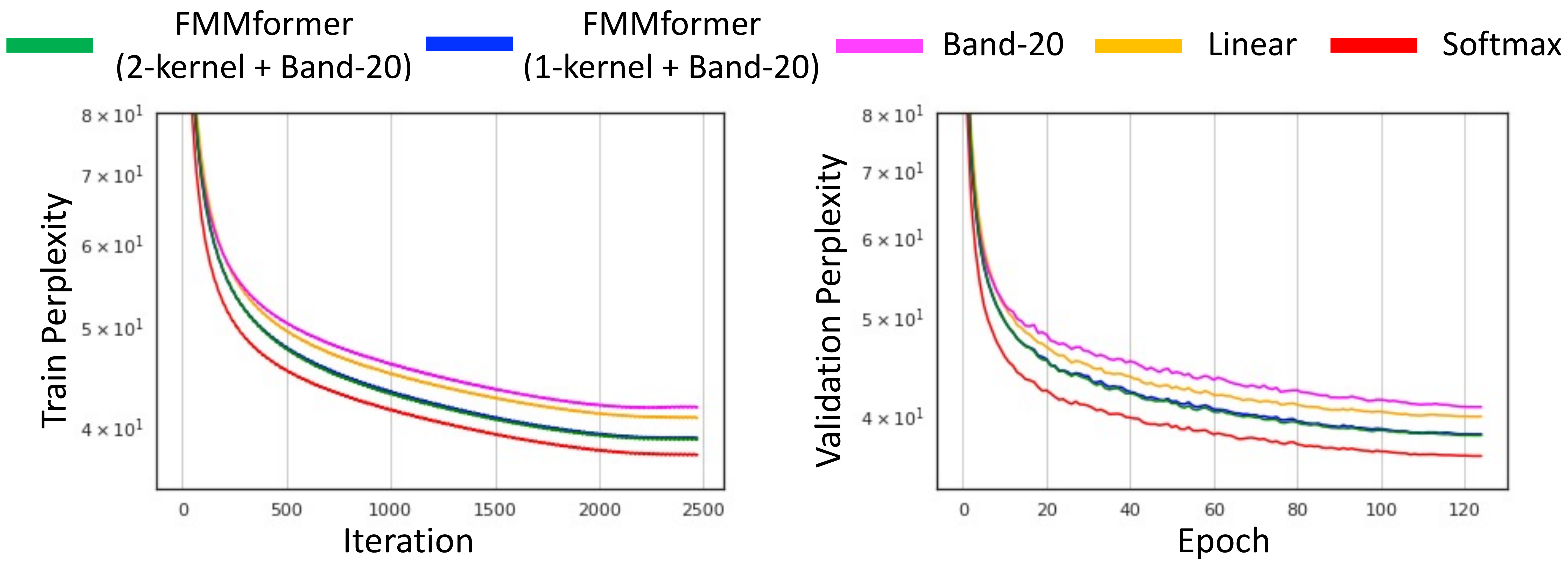}
\vskip -0.1cm
\caption{Train (left) and validation (right) perplexity during training for language modeling on WikiText-103. Our FMMformers converge faster than both the linear transformer and the softmax transformer with a banded attention matrix.}
\label{fig:train-evolution-lm}
\end{figure*}

\section{Visualization for FMMformer-based Language Models}
In this section, we visually study how attentions in FMMformers work. In particular, we consider the FMMformer with one low-rank kernel and one banded attention matrix of bandwidth 5 trained for the language modeling task on WikiText-103, i.e., the FMMformer (1-kernel + Band$_{5}$) in Table~\ref{tab:wikitext103}. We visualize the far-field and near-field attentions in one layer of the FMMformer in Figure~\ref{fig:lm-viz}. As expected, while the near-field attentions in FMMformers capture short-range dependencies, the far-field attentions capture long-range dependencies. Here the sequence length is 256, and the size of each matrix in Figure~\ref{fig:lm-viz} is $256 \times 256$.

\begin{figure*}[!t]
\centering
\includegraphics[width=\linewidth]{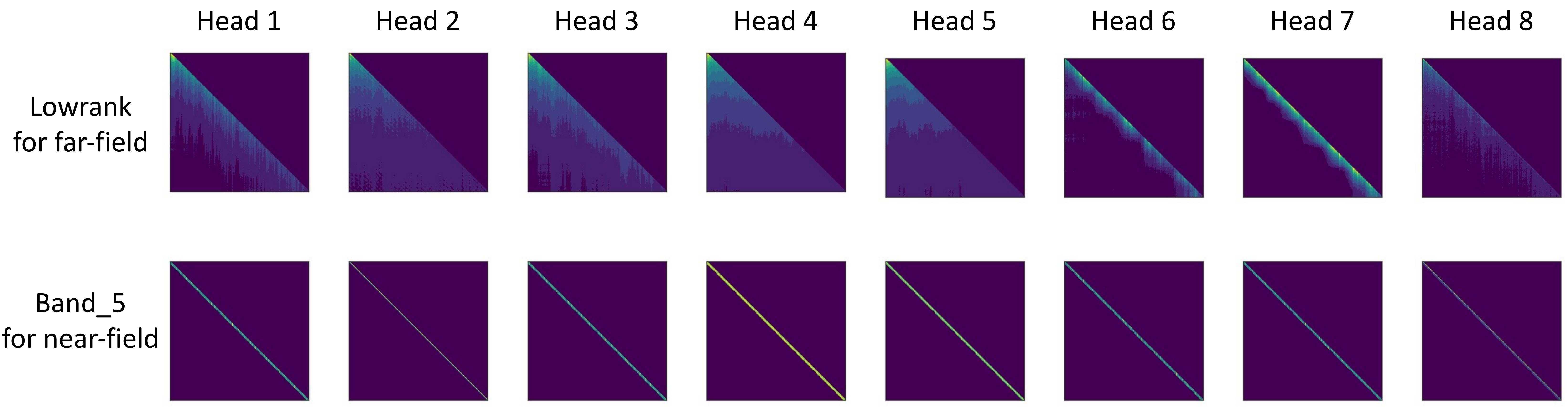}
\vskip -0.1cm
\caption{Attention matrices from each head at one layer of the FMMformer trained for language modeling on WikiText-103. The top row depicts the far-field attentions captured by the low-rank matrices in FMMformer, and the bottom row plots the near-field attentions captured by the banded matrices of bandwidth 5 in FMMformer. While the near-field attentions account for short-range dependencies in the data, the far-field attentions capture long-range dependencies. The sequence length here is 256, and the size of each matrix is $256 \times 256$}
\label{fig:lm-viz}
\end{figure*}

\end{document}